\def\lyu{\textcolor{black}}
\def\mb{\mathbf}
\def\mbb{\mathbb}
\def\mc{\mathcal}
\def\etal{{\em et al.\/}\, }
\def\ie{\textit{i.e.}}
\newtheorem{lemma}{Lemma}
\newtheorem{proof}{Proof}
\begin{document}
%
% paper title
% Titles are generally capitalized except for words such as a, an, and, as,
% at, but, by, for, in, nor, of, on, or, the, to and up, which are usually
% not capitalized unless they are the first or last word of the title.
% Linebreaks \\ can be used within to get better formatting as desired.
% Do not put math or special symbols in the title.
\title{Variational Continual Test-Time Adaptation}
%
%
% author names and IEEE memberships
% note positions of commas and nonbreaking spaces ( ~ ) LaTeX will not break
% a structure at a ~ so this keeps an author's name from being broken across
% two lines.
% use \thanks{} to gain access to the first footnote area
% a separate \thanks must be used for each paragraph as LaTeX2e's \thanks
% was not built to handle multiple paragraphs
%

\author{Fan~Lyu,~\IEEEmembership{Member,~IEEE,}
        Kaile~Du,~\IEEEmembership{Student Member,~IEEE,}
        Yuyang~Li,
        Hanyu~Zhao,
        Fuyuan~Hu,~\IEEEmembership{Member,~IEEE,}
        Zhang~Zhang$^\dagger$,~\IEEEmembership{Member,~IEEE,}
        Guangcan~Liu,~\IEEEmembership{Senior Member,~IEEE,}
        and~Liang~Wang,~\IEEEmembership{Fellow,~IEEE}
        % <-this % stops a space
\thanks{F. Lyu is with the Computer Vision Center (CVC), Universitat Autònoma de Barcelona, Barcelona 08193, Spain. Email: fanlyu@cvc.uab.cat}
\thanks{Z. Zhang and L. Wang are with the New Laboratory of Pattern Recognition, State Key Laboratory of Multimodal Artificial Intelligence Systems, Institute of Automation, Chinese Academy of Sciences, Beijing, 100019 China. E-mail: \{zzhang, wangliang\}@nlpr.ia.ac.cn.}% <-this % stops a space
\thanks{K. Du, Y, Li and G. Liu are with the School of Automation, Southeast University, Nanjing, 210096 China. E-mail: \{kailedu, yuyangli\}@seu.edu.cn, gcliu1982@gmail.com.}% <-this % stops a space
\thanks{H. Zhao is with China Nuclear Power Engineering Co.,LTD, Beijing, 100840 China. E-mail: zhaohyb@cnpe.cc}
\thanks{F. Hu is with the School of Electronic \& Engineering, Suzhou University of Science and Technology, Suzhou, 215000 China. E-mail: fuyuanhu@mail.usts.edu.cn.}
\thanks{$^\dagger$ Correspond author.}
\thanks{Manuscript received April 19, 2005; revised August 26, 2015.}}

% note the % following the last \IEEEmembership and also \thanks - 
% these prevent an unwanted space from occurring between the last author name
% and the end of the author line. i.e., if you had this:
% 
% \author{....lastname \thanks{...} \thanks{...} }
%                     ^------------^------------^----Do not want these spaces!
%
% a space would be appended to the last name and could cause every name on that
% line to be shifted left slightly. This is one of those "LaTeX things". For
% instance, "\textbf{A} \textbf{B}" will typeset as "A B" not "AB". To get
% "AB" then you have to do: "\textbf{A}\textbf{B}"
% \thanks is no different in this regard, so shield the last } of each \thanks
% that ends a line with a % and do not let a space in before the next \thanks.
% Spaces after \IEEEmembership other than the last one are OK (and needed) as
% you are supposed to have spaces between the names. For what it is worth,
% this is a minor point as most people would not even notice if the said evil
% space somehow managed to creep in.

% The paper headers
\markboth{Journal of \LaTeX\ Class Files,~Vol.~14, No.~8, August~2015}%
{Shell \MakeLowercase{\textit{et al.}}: Bare Demo of IEEEtran.cls for IEEE Journals}
% The only time the second header will appear is for the odd numbered pages
% after the title page when using the twoside option.
% 
% *** Note that you probably will NOT want to include the author's ***
% *** name in the headers of peer review papers.                   ***
% You can use \ifCLASSOPTIONpeerreview for conditional compilation here if
% you desire.

% If you want to put a publisher's ID mark on the page you can do it like
% this:
%\IEEEpubid{0000--0000/00\$00.00~\copyright~2015 IEEE}
% Remember, if you use this you must call \IEEEpubidadjcol in the second
% column for its text to clear the IEEEpubid mark.

% use for special paper notices
%\IEEEspecialpapernotice{(Invited Paper)}

% make the title area
\maketitle

% As a general rule, do not put math, special symbols or citations
% in the abstract or keywords.
\begin{abstract}
Continual Test-Time Adaptation (CTTA) task investigates effective domain adaptation under the scenario of continuous domain shifts during testing time. 
Due to the utilization of solely unlabeled samples, there exists significant uncertainty in model updates, leading CTTA to encounter severe error accumulation issues.
% CTTA faces significant error accumulation, that only use unlabeled test data, as it can cause significant error propagation. 
In this paper, we introduce VCoTTA, a variational Bayesian approach to measure uncertainties in CTTA. 
At the source stage, we transform a pretrained deterministic model into a Bayesian Neural Network (BNN) via a variational warm-up strategy, injecting uncertainties into the model. 
During the testing time, we employ a mean-teacher update strategy using variational inference for the student model and exponential moving average for the teacher model. 
Our novel approach updates the student model by combining priors from both the source and teacher models. 
The evidence lower bound is formulated as the cross-entropy between the student and teacher models, along with the Kullback-Leibler (KL) divergence of the prior mixture. 
Experimental results on three datasets demonstrate the method's effectiveness in mitigating error accumulation within the CTTA framework.
% {Our code is available at} \url{https://github.com/fanlyu/vcotta}.
\end{abstract}

% Note that keywords are not normally used for peerreview papers.
\begin{IEEEkeywords}
continual test-time adaptation, domain adaptation, variational inference, error accumulation.
\end{IEEEkeywords}

% For peer review papers, you can put extra information on the cover
% page as needed:
% \ifCLASSOPTIONpeerreview
% \begin{center} \bfseries EDICS Category: 3-BBND \end{center}
% \fi
%
% For peerreview papers, this IEEEtran command inserts a page break and
% creates the second title. It will be ignored for other modes.
\IEEEpeerreviewmaketitle

\section{Introduction}
% The very first letter is a 2 line initial drop letter followed
% by the rest of the first word in caps.
% 
% form to use if the first word consists of a single letter:
% \IEEEPARstart{A}{demo} file is ....
% 
% form to use if you need the single drop letter followed by
% normal text (unknown if ever used by the IEEE):
% \IEEEPARstart{A}{}demo file is ....
% 
% Some journals put the first two words in caps:
% \IEEEPARstart{T}{his demo} file is ....
% 
% Here we have the typical use of a "T" for an initial drop letter
% and "HIS" in caps to complete the first word.
\IEEEPARstart{W}{hen} deploying a pre-trained model for real-world applications, environmental changes are common. Recently, test-time adaptation (TTA)~\cite{liu2023test,verma2025real} tasks have garnered scholarly attention and made significant progress, focusing on how pre-trained models can automatically adapt to domain changes during testing.
However, traditional TTA is typically suited for single-domain changes, while the Continual Test Time Adaptation (CTTA) task extends this concept to long-term scenarios.
CTTA~\cite{wang2022continual} aims to enable a model to accommodate a sequence of distinct distribution shifts during the testing time, making it applicable to various risk-sensitive applications in open environments, such as autonomous driving and medical imaging. However, real-world non-stationary test data exhibit high uncertainty in their temporal dynamics~\cite{huang2022extrapolative}, \lyu{presenting a major challenge named error accumulation}~\cite{wang2022continual}.

\begin{figure}[t]
% \vskip 0.2in
% \vspace{-10px}
\centering
% \subfigure[CTTA task]{\centering
%   \includegraphics[width=.45\linewidth]{}
%   \label{fig:ctta}
%   }
%   ~~~~~
% \subfigure[Unreliable prior in CTTA]{\centering
% \includegraphics[width=.7\linewidth]{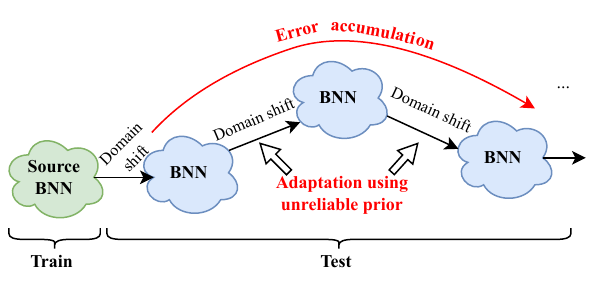}
  % \label{fig:unreliable}}
\includegraphics[width=\linewidth]{figs/err_accu.pdf}
\vspace{-15px}
\label{fig:unreliable}
\caption{
In CTTA task, a BNN model is first trained on a source dataset, and then is used to adapt to updated with unreliable priors, which may suffer from noisy update.
In this paper, we use the Bayesian approach to measure the uncertainty and try to reduce the effect of unreliable priors, achieving better adaptation.
}
% \label{fig:ctta}
\vspace{-10px}
% \vskip -0.2in
\end{figure}

\lyu{To solve the error accumulation issue}, existing CTTA studies rely on methods that enforce prediction confidence, such as entropy minimization. 
% Recently, CTTA~\cite{wang2022continual} has been introduced to tackle TTA within a continuously changing target domain, involving long-term adaptation. 
% This configuration often grapples with the challenge of error accumulation~\cite{tarvainen2017mean,wang2022continual}. Specifically, prolonged exposure to unsupervised loss from unlabeled test data during long-term adaptation may result in significant error accumulation. Additionally, as the model is intent on learning new knowledge, it is prone to forgetting source knowledge, which poses challenges when accurately classifying test samples similar to the source distribution.
The majority of the existing methods focus on improving the confidence of the source model during the testing phase.
For example, some methods employ the mean-teacher architecture~\cite{tarvainen2017mean} to mitigate error accumulation, where the student learns to align with the teacher and the teacher updates via moving average with the student. 
As to the challenge of forgetting source knowledge, some methods adopt augmentation-averaged predictions~\cite{wang2022continual, brahma2023probabilistic,dobler2023robust,yang2023exploring} for the teacher model, strengthening the teacher's confidence to reduce the influence from highly out-of-distribution samples.
Some other methods, such as~\cite{dobler2023robust,chakrabarty2023sata}, propose to adopt the contrastive loss to maintain the already learned semantic information.
Moreover, some researchers believe that the source model is more reliable, thus they are designed to restore the source parameters~\cite{wang2022continual,brahma2023probabilistic}.
Though the above methods keep the model from confusion of vague pseudo labels, they may suffer from overly confident predictions that are less calibrated.
\lyu{\textit{As a result, these methods cannot adaptively distinguish between reliable and unreliable samples, which leaves them prone to error accumulation and limits their ability to quantify risks during prediction.}}
% To mitigate this issue, it is helpful to estimate the uncertainty in the neural network.

% Because existing approaches often lead to predictions that are overly confident and less well-calibrated, thus limiting the model's ability to quantify risks during predictions.
In CTTA, the reliable estimation of uncertainty becomes particularly crucial in the context of continual distribution shift~\cite{ovadia2019can}. It is meaningful to design a model capable of encoding the uncertainty associated with temporal dynamics and effectively handling distribution shifts.
To address this problem, we refer to the Bayesian Inference (BI)~\cite{box2011bayesian}, which retains a distribution over model parameters that indicates the plausibility of different settings given the observed data, and it has been witnessed as effective in traditional continual learning (CL) tasks~\cite{nguyen2018variational}, which also perform in long-term processes.
In Bayesian continual learning, the posterior in the last learning task is set to be the current prior, which will be multiplied by the current likelihood.
This kind of prior transmission is designed to reduce catastrophic forgetting in continual learning~\cite{wickramasinghe2023continual}.
However, this is not feasible in CTTA because unlabeled data may introduce \textbf{unreliable priors}.
As shown in Fig.~\ref{fig:unreliable}, an unreliable prior may lead to a poor posterior, which may then propagate errors to the next inference, leading to the issue of error accumulation.
The objective of this paper is to devise a CTTA procedure that not only enhances predictive accuracy under distribution shifts but also provides reliable uncertainty estimates by using the Bayesian method.

% \lyu{We explicitly formulate a Bayesian model that elucidates the relationship between unlabeled test data from a different domain and the classifier parameters.}

% \begin{figure}[t]
% % \vskip 0.2in
% \begin{center}
% \centerline{\includegraphics[width=\columnwidth]{icml_numpapers}}
% \caption{Historical locations and number of accepted papers for International
% Machine Learning Conferences (ICML 1993 -- ICML 2008) and International
% Workshops on Machine Learning (ML 1988 -- ML 1992). At the time this figure was
% produced, the number of accepted papers for ICML 2008 was unknown and instead
% estimated.}
% \label{fig:vcl_vs_vctta}
% \end{center}
% % \vskip -0.2in
% \end{figure}

% Motivated by this, in this paper, we explore reducing prior drift and catastrophic forgetting for CTTA in Bayesian neural networks.
Thus, we delve into the utilization of BI framework to evaluate model uncertainty in CTTA, aiming to mitigate the impact of unreliable priors and reduce the error propagation.
To approximate the intractable likelihood in BI, we adopt to use online Variational Inference (VI)~\cite{wang2011online,sato2001online}, and accordingly name our method Variational Continual Test-Time Adaptation (VCoTTA).
At the source stage, we first transform a pretrained deterministic model into a Bayesian Neural Network (BNN) by a variational warm-up strategy, where the local reparameterization trick~\cite{kingma2015variational} is used to inject uncertainties into the source model.
During the testing phase, we employ a mean-teacher update strategy, where the student model is updated via VI and the teacher model is updated by the exponential moving average. 
Specifically, for the update of the student model, we propose to use a mixture of priors from both the source and teacher models, then the Evidence Lower BOund (ELBO) becomes the cross-entropy between the student and teachers plus the KL divergence of the prior mixture.
\lyu{We demonstrate the effectiveness of the proposed method on three benchmark datasets, where the results show that it can mitigate the error accumulation caused by unreliable priors in CTTA, leading to clear performance improvements.}

Our contributions are three-fold:

\begin{enumerate}[label=(\arabic*),left=0pt,itemsep=0pt]
    % \item We develop VCoTTA, a simple yet general framework for continual test-time adaptation that leverages online variational inference (VI) within BNN. 
    % This method effectively reduces error accumulation caused by unreliable priors by combining prior information from both the source and teacher models. 
    % \item We propose a strategy to transform an off-the-shelf model into a BNN via a variational warm-up strategy, which injects uncertainties into the model. This conversion not only preserves the knowledge of the original model but also enhances the model's ability to adapt to new domains by introducing a probabilistic representation of model parameters. Thus, it is easy to leverage the vast array of existing pre-trained models and effectively turn them into BNN models capable of handling domain shifts.
    % \item We build a mean-teacher structure for CTTA, and propose a strategy to blend the teacher's prior with the source's prior to mitigate {unreliable prior} problem. In this strategy, a prior mixture approach is proposed that provides a more reliable updating mechanism for the student model by combining the prior knowledge of the source and teacher models, thereby enhancing the model's robustness in continuously changing environments.
    \item \lyu{We propose {VCoTTA}, a general framework that formulates CTTA as online variational inference within a BNN. This formulation explicitly models uncertainty and mitigates the error accumulation that arises from unreliable priors under persistent distribution shifts.}
    \item \lyu{We design a {variational warm-up strategy} that converts off-the-shelf pre-trained models into BNNs, enabling uncertainty injection while preserving prior knowledge. This allows effective adaptation without retraining from scratch.}
    \item \lyu{We introduce an {adaptive entropy-based prior mixture}, which dynamically fuses the source prior and teacher prior to provide a more reliable updating mechanism. Unlike ensemble-based methods, our mixture weights are determined by uncertainty, ensuring robustness in continuously changing environments.}
\end{enumerate}

\section{Related Work}
\label{sec:rel}

\subsection{Continual Test-Time Adaptation}

TTA enables the model to dynamically adjust to the characteristics of the test data, i.e. target domain, in a source-free and online manner \cite{jain2011online,sun2020test,wang2020tent}. 
% Previous works have enhanced TTA performance through the designs of unsupervised loss~\cite{mummadi2021test,zhang2022memo,liu2021ttt++, choi2022improving,chen2022contrastive,gandelsman2022test}. 
% These endeavours primarily focus on enhancing adaptation within a fixed target domain, representing a single-domain TTA setup, where models adapt to a specific target domain and then reset to their original pretrained state with the source domain, prepared for the next target domain adaptation.
Recently, CTTA~\cite{wang2022continual} has been introduced to tackle TTA within a continuously changing target domain, involving long-term adaptation~\cite{kim2021domain}. This configuration often grapples with the challenge of error accumulation~\cite{tarvainen2017mean,wang2022continual}. Specifically, prolonged exposure to unsupervised loss from unlabeled test data during long-term adaptation may result in significant error accumulation. Additionally, as the model is intent on learning new knowledge, it is prone to forgetting source knowledge, which poses challenges when accurately classifying test samples similar to the source distribution.

% To solve the two challenges, the majority of the existing methods focus on improving the confidence of the source model during the testing phase.

%%%%%%%%%%%%%%%
Some methods employ the mean-teacher architecture~\cite{tarvainen2017mean} to mitigate error accumulation, where the student learns to align with the teacher and the teacher updates via moving average with the student. 
% As to the challenge of forgetting source knowledge, some methods adopt augmentation-averaged predictions for the teacher model, strengthening the teacher's confidence to reduce the influence from highly out-of-distribution samples.
CoTTA~\cite{wang2022continual} used weight-averaged and augmentation-averaged predictions along with stochastic restoration of some neurons to the source pre-trained weights.
ECoTTA~\cite{song2023ecotta} proposed a memory-efficient approach, which utilizes lightweight meta networks and a self-distilled regularization technique to minimize memory consumption.
%%%%%%%%%
Some methods propose to adopt the contrastive loss to maintain the already learnt semantic information.
RMT~\cite{dobler2023robust} utilized symmetric cross-entropy loss for mean teachers and contrastive learning to address error accumulation.
SWA~\cite{yang2023exploring} introduced a framework that enhances pseudo-label learning through self-adaptive thresholding, soft-weighted contrastive learning, and soft weight alignment to address the challenges of noisy pseudo-labels and knowledge forgetting.
%%%%%%%%%%%
Some methods believe that the source model is more reliable, thus they are designed to use the source parameters more effectively.
Gan \etal\cite{gan2023decorate} introduced a visual domain prompt approach, which learns lightweight image-level visual prompts to reformulate input data for adapting to changing target domains without fine-tuning the source model.
PETAL~\cite{brahma2023probabilistic} proposed a probabilistic framework, which employs a student-teacher model, a regularizer based on the source model's posterior, and a data-driven parameter restoration technique using Fisher information matrix.
VIDA~\cite{liu2023vida} employed high-rank and low-rank embedding spaces to manage domain-specific and domain-shared knowledge, respectively, and utilizes a homeostatic knowledge allotment strategy to dynamically fuse the knowledge from these adapters for improved adaptation performance.
ADMA~\cite{liu2024continual} employed distribution-aware masking and histograms of oriented gradients reconstruction to enhance target domain knowledge extraction.
\lyu{Though the above methods alleviate the confusion caused by vague pseudo labels, they often lead to over-confident predictions with poor calibration. This indicates that existing deterministic strategies lack mechanisms to reliably handle sample-wise uncertainty, leaving room for approaches that explicitly incorporate uncertainty estimation.}

\subsection{Bayesian Neural Network}

Bayesian framework is natural to incorporate past knowledge and sequentially update the belief with new data~\cite{zhao2022deep}.
The bulk of work on Bayesian deep learning has focused on scalable approximate inference methods. 
These methods include stochastic VI~\cite{hernandez2015probabilistic,louizos2017multiplicative}, dropout~\cite{gal2016dropout,kingma2015variational} and Laplace approximation~\cite{ritter2018scalable,friston2007variational} etc., and leveraging the stochastic gradient descent (SGD) trajectory, either for a deterministic approximation or sampling.
In a BNN, we specify a prior $p(\boldsymbol{\theta})$ over the neural network parameters, and compute the posterior distribution over parameters conditioned on training data, $p(\boldsymbol{\theta}|\mc{D}) \propto p(\boldsymbol{\theta})p(\mc{D}|\boldsymbol{\theta})$. This procedure should give considerable advantages for reasoning about predictive uncertainty, which is especially relevant in the small-data setting.

Crucially, when performing BI, we need to choose a prior distribution that accurately reflects the prior beliefs about the model parameters before seeing any data~\cite{gelman1995bayesian,fortuin2021bayesian}.
In conventional static machine learning, the most common choice for the prior distribution over the BNN weights is the simplest one: the isotropic Gaussian distribution. 
% Isotropic Gaussians are used across almost all fields of Bayesian deep learning, ranging from variational inference
However, this choice has been proved indeed suboptimal for BNNs~\cite{fortuin2021bayesian}.
Recently, some studies estimate uncertainty in continual learning within a BNN framework, such as \cite{nguyen2018variational,ebrahimi2019uncertainty,farquhar2019unifying,kurle2019continual}. They set the current prior to the previous posterior to mitigate catastrophic forgetting. 
However, the prior transmission is not reliable in the unsupervised CTTA task.
Any prior mistakes will be enlarged by adaptation progress, manifesting error accumulation.
To solve the {unreliable prior} problem, this paper proposes a prior mixture method based on VI.

%%%%%%%%%%%%%%%%%%%%%%%%%%%%%%%%%%%%%%%%%
\section{Variational Inference in CTTA}
\label{sec:method}

\subsection{BI in traditional CL and in CTTA}
We start from the supervised BI in typical continual learning, where the model aims to learn multiple classification tasks in sequence.
Let $\mc{D}=\{(x_n,y_n)\}_{n=1}^N$ be the training set, where $x_n$ and $y_n$ denotes the training sample and the corresponding class label. The task $t$ is to learn a direct posterior approximation over the model parameter $\boldsymbol{\theta}$ as follows.
\begin{equation}
p(\boldsymbol{\theta}|\mathcal{D}_{1:t})~\propto~ p_t(\boldsymbol{\theta})p(\mathcal{D}_t|\boldsymbol{\theta}),
\end{equation}
where $p(\boldsymbol{\theta}|\mathcal{D}_{1:t})$ denotes the posterior of sequential tasks on the learned parameter and $p(\mathcal{D}_t|\boldsymbol{\theta})$ is the likelihood of the current task.
The current prior $p_t(\boldsymbol{\theta})$ is regarded as the given knowledge.
\cite{nguyen2018variational} proposes that this current prior can be the posterior learned in the last task, \ie,  $p_t(\boldsymbol{\theta})=p(\boldsymbol{\theta}|\mathcal{D}_{1:t-1})$, where the inference becomes 
\begin{equation}
p(\boldsymbol{\theta}|\mathcal{D}_{1:t})~\propto~ p(\boldsymbol{\theta}|\mathcal{D}_{1:t-1})p(\mathcal{D}_t|\boldsymbol{\theta}).
\end{equation}
% The detailed process can be shown in Sec~\ref{app:comparebi}.

In contrast to continual learning~\cite{dai2025prompt}, CTTA faces a sequence of learning tasks in test time without any label information, requiring the model to adapt to each novel domain sequentially.
In this case, we assume that each domain is i.i.d. and the classes are separable following many unsupervised studies~\cite{miller1996mixture, van2020survey,caron2018deep}.
We use $\mc{U}=\{x_n\}_{n=1}^N$ to represent the unlabeled test dataset.
The CTTA model is first trained on a source dataset $\mc{D}_0$, and then adapted to unlabeled test domains starting from $\mc{U}_1$.
For the $t$-th adaptation, we have
\begin{equation}
p(\boldsymbol{\theta}|\mathcal{U}_{1:t}\cup\mathcal{D}_0)\propto p_t(\boldsymbol{\theta})p(\mathcal{U}_t|\boldsymbol{\theta}).
\end{equation}
Similarly, we can set the last posterior to be the current prior, \ie, $p_t(\boldsymbol{\theta})= p(\boldsymbol{\theta}|\mathcal{U}_{1:t-1}\cup \mathcal{D}_0)$ {and $p_1(\boldsymbol{\theta})= p(\boldsymbol{\theta}|\mathcal{D}_0)$}.
% However, the adaption with the unlabeled test data will cause significant error accumulation.
{However, employing BI for adaptation on unlabeled testing data can result in untrustworthy posterior estimates. Therefore, during subsequent adaptation, the untrustworthy posterior automatically transforms into unreliable priors, leading to error accumulation.}
In other words, an unreliable prior $p_t(\boldsymbol{\theta})$ will make the current posterior even less trustworthy.
Moreover, the joint likelihood $p(\mathcal{U}_t|\boldsymbol{\theta})$ for $t>0$ is intractable on unlabeled data.
The BI in CL and in CTTA can be compared in Fig.~\ref{fig:vcl_vs_vctta}.
For CL, BI is conducted by the posterior propagation, that is, the prior of the next task is equal to the current posterior.
This is feasible in supervised CL, where the data label is provided.
For CTTA, the posterior is not trustworthy using only pseudo labels to adapt to a new domain.
Thus, propagating the untrustworthy posterior to the next stage would make it unreliable prior, which will result in error accumulation.

% \begin{equation}
% \begin{aligned}
% p(\theta|\mathcal{U}_{1:T}\cup\mathcal{D}_0)&\propto p(\theta|\mathcal{D}_0)\prod_{t=1}^Tp(\mathcal{U}_t|\theta)\\ &\propto p(\theta|\mathcal{U}_{1:T-1}\cup \mathcal{D}_0)p(\mathcal{U}_T|\theta)
% \end{aligned}
% \end{equation}

\begin{figure}[t]
% \vskip 0.2in
\centering
\subfigure[BI in continual learning]{\centering
  \includegraphics[width=\linewidth]{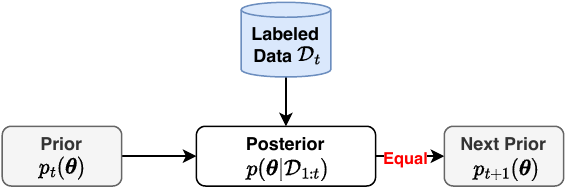}
  \label{fig:vcl}
  }
\subfigure[BI in CTTA (Our method)]{\centering
\includegraphics[width=\linewidth]{vctta.pdf}
\label{fig:vctta}}
\vspace{-10px}
\caption{BI in continual learning versus CTTA. We find the traditional prior transmission is infeasible in CTTA because of the unreliable prior from unlabeled data. In our method, we place CTTA in a mean-teacher structure, and design BI in CTTA using a mixture of teacher prior and source prior. The next teacher prior is updated by the exponential moving average.}
\label{fig:vcl_vs_vctta}
% \vskip -0.2in
\vspace{-10px}
\end{figure}

\subsection{VI in CTTA}

To make the BI feasible in CTTA task, in this paper, we transform the question to an easy-to-compute form.
Referring to \cite{grandvalet2004semi}, the unsupervised inference can be transformed into 
\begin{equation}
    p(\boldsymbol{\theta}|\mc{U})\propto p(\boldsymbol{\theta})\exp\left(-{\lambda}{H}(\mc{U}|\boldsymbol{\theta})\right),
    \label{eq:bi4ctta}
\end{equation}
where $H$ denotes the conditional entropy and $\lambda$ is a scalar hyperparameter to weigh the entropy term.
{This simple form reveals that the prior belief about the conditional entropy of labels is given by the inputs.}
The observation of the input $\mc{U}$ provides information on the drift of the input distribution, which can be used to update the belief over the learned parameters $\boldsymbol{\theta}$ through Eq.~\eqref{eq:bi4ctta}. Consequently, this allows the utilization of unlabeled data for BI.
More detailed derivations can be seen in Section~\ref{app:bi}.

% \begin{figure*}[t]
% % \vskip 0.2in
% \centering
% \includegraphics[width=.7\linewidth]{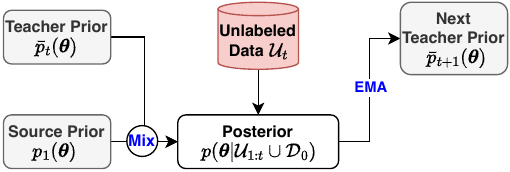}
% \label{fig:vctta}
% % \vspace{-10px}
% \caption{
% % Bayesian inference comparison between continual learning and CTTA. We find the traditional prior transmission is infeasible in CTTA because of the prior shift from unlabeled data. 
% VCoTTA is built on mean-teacher structure, and conducts VI in CTTA using a mixture of teacher prior and source prior. The next teacher prior is updated by the exponential moving average.}
% \label{fig:vcl_vs_vctta}
% \vspace{-10px}
% \end{figure*}

% Impossible to compute likelihood without the ground-truth label.

% \lyu{Question Transform.}

% 1. Semi-supervised learning via entropy minimization~\cite{grandvalet2004semi}

% \begin{equation}
%     p(\theta|\psi)\propto p(\theta)\exp(-\lambda\mathrm{H}_{\theta,\psi}(y|\mathbf{x}))
% \end{equation}

% 2. CTTA with conditional entropy

% \begin{equation}
% \begin{aligned}
%     p(\theta|\mathcal{U}_{1:T}\cup\mathcal{D}_0)&\propto p(\theta|\mathcal{D}_0)\prod_{t=1}^Te^{-\lambda H_{\theta}(\mathcal{U}_t)}\\&\propto p(\theta|\mathcal{D}_0)e^{-\lambda \sum_{t=1}^T H_{\theta}(\mathcal{U}_t)}
% \end{aligned}
% \end{equation}

% \lyu{Variational Inference in TTA}

In a BNN, the posterior distribution is often intractable and some approximation methods are required, even when calculating the initial posterior. 
In this paper, we leverage online VI, as it typically outperforms the other methods for complex models in the static setting~\cite{bui2016deep}.
VI defines a variational distribution $q(\boldsymbol{\theta})$ to approxmiate the posterior $p(\boldsymbol{\theta}|\mc{U})$.
The approximation process is as follows.
\begin{equation}
q_t(\boldsymbol{\theta})=\arg\min_{q\in\mbb{Q}}\mathrm{KL}\left[q(\boldsymbol{\theta})\parallel\frac1{Z_t}p_t(\boldsymbol{\theta})e^{-{\lambda}{H}(\mc{U}_t|\boldsymbol{\theta})}\right],
\label{eq:varapprox}
\end{equation}
where $\mbb{Q}$ is the distribution searching space and $Z_t$ is the intractable normalizing hyperparameter.
% Following the definition of KL divergence and the standard derivation of the Evidence Lower BOund (ELBO) is as the following formulas.
Specifically, the KL divergence is expanded as
\begin{equation}
\begin{aligned}
&\mathrm{KL}\left[q(\boldsymbol{\theta})\parallel\cfrac1{Z_t}p_{t}(\boldsymbol{\theta})e^{-\lambda H(\mc{U}_t|\boldsymbol{\theta})}\right] \\
= & -\int_{\boldsymbol{\theta}} q(\boldsymbol{\theta})\log\cfrac{\cfrac{1}{Z_t}p_{t}(\boldsymbol{\theta})e^{-\lambda H(\mc{U}_t|\boldsymbol{\theta})}}{q(\boldsymbol{\theta})} d \boldsymbol{\theta} \\
= & -\int_{\boldsymbol{\theta}} q(\boldsymbol{\theta})\log\cfrac{1}{Z_t}e^{-\lambda H(\mc{U}_t|\boldsymbol{\theta})} d \boldsymbol{\theta} - \int_{\boldsymbol{\theta}} q(\boldsymbol{\theta})\log\cfrac{p_{t}(\boldsymbol{\theta})}{q(\boldsymbol{\theta})} d \boldsymbol{\theta} \\
= & \int_{\boldsymbol{\theta}} q(\boldsymbol{\theta})\log Z_t d \boldsymbol{\theta} + \lambda\int_{\boldsymbol{\theta}} q(\boldsymbol{\theta})H(\mc{U}_t|\boldsymbol{\theta}) d \boldsymbol{\theta} \\ & - \int_{\boldsymbol{\theta}} q(\boldsymbol{\theta})\log\cfrac{p_{t}(\boldsymbol{\theta})}{q(\boldsymbol{\theta})} d \boldsymbol{\theta} \\
= & \log Z_t + \lambda\mathbb{E}_{\boldsymbol{\theta}\sim q(\boldsymbol{\theta})}H(\mc{U}_t|\boldsymbol{\theta}) + \mathrm{KL}\left(q(\boldsymbol{\theta})\parallel p_{t}(\boldsymbol{\theta})\right),
\end{aligned}
\end{equation}
where the first constant term can be reduced in the optimization.
Thus, the ELBO can be computed by 
\begin{equation}
    {\text{ELBO}} = -\underbrace{\lambda\mathbb{E}_{ \boldsymbol{\theta}\sim q(\boldsymbol{\theta})}H(\mathcal{U}_t| \boldsymbol{\theta})}_{\lyu{\text{Entropy term}}} - \underbrace{\text{KL}\left( q(\boldsymbol{\theta})||p_{t}( \boldsymbol{\theta})\right)}_{\lyu{\text{KL term}}}.
    \label{eq:elbo}
\end{equation}
% The details of the ELBO can be seen in Appendix~\ref{app:elbo}.
Optimizing with Eq.~\eqref{eq:elbo} makes the model adapt to domain shift.
While VI offers a good framework for measuring uncertainty in CTTA, it is noteworthy that VI does not directly address the issue of unreliable priors. The error accumulation remains a significant concern.

Despite this, the form of the ELBO in variational inference offers a pathway for mitigating the impact of unreliable priors.
In Eq.~\eqref{eq:elbo}, the \textit{entropy term} may result in overly confident predictions that are less calibrated, while the \textit{KL term} may be directly affected by an unreliable prior.
In the following section, we will discuss how to solve the problems when computing the two terms.

% \begin{equation}
% \begin{aligned}
% &\mathrm{KL}\big(q(\boldsymbol{\theta})\parallel\cfrac1{Z_t}p_{t}(\boldsymbol{\theta})e^{-\lambda H_{\theta}(\mathcal{U}_t)}\big) \\
% = &-\cfrac{1}{Z_t}\int_\theta q(\theta)\log\cfrac{p_{t}(\boldsymbol{\theta})e^{-\lambda H_{\theta}(\mathcal{U}_t)}}{q(\theta)} d \theta \\
% = & \cfrac{1}{Z_t}\left[\int_\theta q(\theta)\log q(\theta) d\theta-\int_\theta q(\theta)\log p_{t}(\theta) d\theta+\lambda\int_\theta q(\theta)H_\theta(\mathcal{U}_t) d\theta\right] \\
% = & \cfrac{1}{Z_t}\left[KL\left( q(\theta)||p_{t}(\theta)\right)+\lambda\mathbb{E}_{\theta\sim q(\theta)}H_\theta(\mathcal{U}_t)\right]
% \end{aligned}
% \end{equation}

\section{Adaptation and Inference in VCoTTA}

\label{sec:vcotta}

\subsection{\lyu{Entropy term: Integrating Mean Teacher into VI}}
\label{sec:mt}
\lyu{In the above section, we show that unreliable priors are a key challenge in VI-based CTTA.
To address this, our solution is to design an \emph{uncertainty-aware prior mixture}, where uncertainty determines how much the model should rely on the source prior or the teacher prior. 
To implement this mechanism in practice, we adopt the standard Mean Teacher (MT) framework~\cite{tarvainen2017mean} as a backbone, following prior works in CTTA~\cite{wang2022continual}. 
It is important to note that MT itself is not our contribution, but a commonly used mechanism that provides the teacher distribution for our Bayesian formulation.}
MT is initially proposed in semi-supervised and unsupervised learning, where the teacher model guides the unlabeled data, helping the model generalize and improve performance with the utilization of large-scale unlabeled data.
Because of these advantages, MT is used in some CTTA methods such as \cite{wang2022continual}, thus we also build our method on it.
% \lyu{It is important to note that MT itself is not our contribution, but rather a commonly used mechanism.}

MT structure is composed of a student model and a teacher model, where the student model learns from the teacher and the teacher updates using Exponential Moving Average (EMA)~\cite{hunter1986exponentially}.
In VI, the student is set to be the variational distribution $q(\boldsymbol{\theta})$, which is a Gaussian mean-field approximation for its simplicity. It is achieved by stacking the biases and weights of the network as follows.
\begin{equation}
q(\boldsymbol{\theta})=\prod\nolimits_d\mathcal{N}\left(\boldsymbol{\theta}_{d};\mu_{d},\text{diag}(\sigma_{d}^2)\right),
\end{equation}
where $d$ denotes each dimension of the parameter.
\lyu{The teacher model $\bar{p}(\boldsymbol{\theta})$ is also modeled as a Gaussian distribution and serves as one component of the prior mixture. 
Importantly, $\bar{p}(\boldsymbol{\theta})$ itself is not guaranteed to be reliable. 
Instead, our Bayesian formulation adaptively weights the contribution of the teacher and the source prior according to uncertainty.
In practice, this adaptive weighting is realized by guiding the student distribution to align with the teacher through a cross-entropy (CE) loss:}
\begin{equation}
    L_{\text{CE}}(q, \bar{p}) = - \mbb{E}_{\boldsymbol{\theta}\sim q(\boldsymbol{\theta})}\mbb{E}_{x\sim \mc{U}}\left[\bar{p}(x|\boldsymbol{\theta}) \log q(x|\boldsymbol{\theta})\right].
    % q_t(\boldsymbol{\theta})=\arg\min_{q\in\mc{Q}} L_{\text{CE}}(q, p),
\end{equation}
\lyu{In practice, following~\cite{wang2019symmetric}, we also experiment with the Symmetric Cross-Entropy (SCE) loss to improve training stability:}
\begin{equation}
\begin{aligned}
    L_{\text{SCE}}(q, \bar{p}) = - \mbb{E}_{\boldsymbol{\theta}\sim q(\boldsymbol{\theta})}\mbb{E}_{x\sim \mc{U}}\big[\bar{p}(x|\boldsymbol{\theta}) \log q(x|\boldsymbol{\theta}) \\ +q(x|\boldsymbol{\theta}) \log \bar{p}(x|\boldsymbol{\theta})\big].
\end{aligned}
\label{eq:sce}
\end{equation}
\lyu{SCE is not our contribution, but an auxiliary choice that balances gradients for high- and low-confidence samples.}

\subsection{KL term: Mixture-of-Gaussian Prior}

\label{sec:mixprior}

For the KL term, to reduce the impact of {unreliable prior}, we propose a mixing-up approach to combining the teacher and source prior adaptatively.
The source prior is warmed up upon the pretrained deterministic model $p_1(\boldsymbol{\theta}) = p(\boldsymbol{\theta}|\mathcal{D}_0)$ (see Section~\ref{sec:warmup}).
The teacher model $\bar{p}_t(\boldsymbol{\theta})$ is updated by EMA (see Section~\ref{sec:ema}).
We assume that the prior should be the mixture of the two Gaussian priors.
Using only the source prior, the adaptation is limited.
While using only the teacher prior, the prior is prone to be unreliable.

We use the mean entropy derived from a given series of data augmentation to represent the confidence of the two prior models and mix up the two priors with a modulating factor
\begin{equation}
    \alpha = \frac{1}{|\mc{I}|}\sum\nolimits_{i\in\mc{I}}\frac{e^{H(x| \boldsymbol{\theta}_0)/\tau}}{e^{H(x| \boldsymbol{\theta}_0)/\tau} + e^{H(x| \bar{\boldsymbol{\theta}})/\tau}},
    \label{eq:alpha}
\end{equation}
where $\mc{I}$ denotes augmentation types. 
{$\boldsymbol{\theta}_0$ and $\bar{\boldsymbol{\theta}}$ are the parameters of the source model and the teacher model. $\tau$ is the temperature factor.} 
{Thus, as shown in Fig.~\ref{fig:vctta}, the current prior  $p_t(\boldsymbol{\theta})$} is set to the mixture of priors as 
\begin{equation}
    p_t(\boldsymbol{\theta}) = 
    \alpha\cdot p_1(\boldsymbol{\theta}) + (1-\alpha) \cdot\bar{p}_t(\boldsymbol{\theta}).
\end{equation}
In the VI, we use the upper bound to update the KL term \cite{liu2019variational} (see Section~\ref{app:mix}) for simplicity, 
% thus the KL term has the upper bound
\begin{equation}
    \text{KL}\left( q||p_{t}\right) \le \alpha\cdot\text{KL}\left( q||{p}_{0}\right) + (1-\alpha)\cdot\text{KL}\left( q||\bar{p}_{t}\right).
\end{equation}
% See the above derivation details in Appendix~\ref{app:mix}.
Furthermore, we also improve the teacher-student alignment in the entropy term (see Eq.~\eqref{eq:sce}) by picking up the augmented logits with a larger confidence than the raw data.
That is, we replace the teacher log-likelihood $\log\bar{p}(x|\boldsymbol{\theta})$ by 
\begin{equation}
    \log{\bar{p}'(x|\boldsymbol{\theta})} =
    % \left\{
    % \begin{aligned}
    \frac{
        \sum_{i\in\mc{I}}\mb{1}\left(f(\bar{p}(x'_i))>f(\bar{p}(x))+\epsilon\right)\cdot\log\bar{p}(x'_i)}
        {\sum_{i\in\mc{I}}\mb{1}\left(f(\bar{p}(x'_i))>f(\bar{p}(x))+\epsilon\right)},
        % y
    % \end{aligned}
    % \right.
    \label{eq:teacher_aug}
\end{equation}
where, for brevity, we let $\bar{p}(x'_i)=\bar{p}(x'_i|\boldsymbol{\theta})$ and $\bar{p}(x)=\bar{p}(x)|\boldsymbol{\theta})$ in short.
$f(\cdot)$ is the confidence function.
$\epsilon$ denotes the confidence margin and $\mb{1}(\cdot)$ is an indicator function.
Eq.~\eqref{eq:teacher_aug} can be regarded as a filter, meaning that for each sample, the reliable teacher is represented by the average of its augmentations with $\epsilon$ more confidence.
In Section~\ref{app:whymix}, we prove that the proposed mixture-of-Gaussian is benifical to CTTA.
In the experiments of Section~\ref{app:margin}, we discuss the influence of different $\epsilon$.

\subsection{Adaptation and Inference}

\subsubsection{Variational Warm-up}
\label{sec:warmup}

To obtain a source BNN, instead of training a model from scratch on the source data $\mc{D}_0$, we transform a pretrained deterministic model to a BNN by variational warm-up strategy.
% Inspired by this, we design to transform the pretrained deterministic CNN to a BNN by variational warm-up. 
Specifically, we leverage the local reparameterization trick~\cite{kingma2015variational} to add stochastic parameters, and warm up the model: 
\begin{equation}
q_0(\boldsymbol{\theta})=\arg\min_{q\in\mbb{Q}}\mathrm{KL}\left[q(\boldsymbol{\theta})\parallel\frac1{Z_0}p(\boldsymbol{\theta})p(\mc{D}_0|\boldsymbol{\theta})\right],
\label{eq:vwu}
\end{equation}
{where $p(\boldsymbol{\theta})$ represents the prior distribution, say the pretrained deterministic model.
% which we can set as $p_0(\boldsymbol{\theta})$ obtained from a pretrained deterministic model.
}
Eq.~\eqref{eq:vwu} denotes a standard VI on the source data, and we optimize the ELBO to obtain the variational distribution~\cite{wang2011online}.
By the variational warm-up, we can easily transform an off-the-shelf pretrained model into a BNN with a stochastic dynamic. The variational warm-up strategy is outlined in Algorithm \ref{alg:vwu}.

\begin{figure}[t]
% \begin{wrapfigure}[7]{R}{0.61\linewidth}
%   \vspace{-25px}
%   \begin{flushright}
%     \begin{minipage}{1\linewidth}
    \begin{algorithm}[H]
   \caption{Variational warm-up}
   \label{alg:vwu}
\begin{algorithmic}[1]
   \STATE {\bfseries Input:} Source data $\mc{D}_0$, pretrained model $p_0(\boldsymbol{\theta})$
   % \REPEAT
   % \STATE Build BNN based on $p_0(\boldsymbol{\theta})$.
   \STATE Initialize prior distribution $p(\boldsymbol{\theta})$ with $p_0(\boldsymbol{\theta})$
   \STATE Update $p(\boldsymbol{\theta}|\mc{D}_0) \approx q_0(\boldsymbol{\theta})$ by $p(\boldsymbol{\theta})$ and $\mc{D}_0$ using Eq.~\eqref{eq:vwu}
   % \FOR{$i=1$ {\bfseries to} $m-1$}
   % \IF{$x_i > x_{i+1}$}
   % \STATE Swap $x_i$ and $x_{i+1}$
   % \STATE $noChange = false$
   % \ENDIF
   % \ENDFOR
   % \UNTIL{$noChange$ is $true$}
   \STATE {\bfseries Output:} Source prior $p_1(\boldsymbol{\theta}) = p(\boldsymbol{\theta}|\mc{D}_0)$
\end{algorithmic}
\end{algorithm}
%     \end{minipage}
%   \end{flushright}
% \end{wrapfigure} 
\vspace{-20px}
\end{figure}

The warm-up strategy is a common approach in TTA and CTTA tasks to further build knowledge structure for the source model, such as \cite{jung2023cafa,song2023ecotta,dobler2023robust,chen2023each}.
Some other methods may not use warm-up but still use the source data, such as \cite{niu2022efficient}.
The warm-up strategy uses the source data only before deploying the model to CTTA scenario, and it is regarded as a part of pretraining. 
All these methods using source data are operationalized source-free at test time and found beneficial to CTTA.
We use the warm-up to inject the uncertainties into a given source model, i.e., turning an off-the-shelf pretrained model into a pretrained BNN model. 
% That means we can use the pretrained model provided by Pytorch like other compared methods. 
This is convenient to obtain a pretrained BNN, because the warm-up strategy uses only a few epochs.
We offer more discussions and experiments on the proposed variational warm-up strategy in Section~\ref{app:vwu}.

\subsubsection{Student update via VI}
\label{sec:stu}

The student model $q_t(\boldsymbol{\theta})$ is adapted by approximating using Eq.~\eqref{eq:varapprox}, and is optimized on:
\begin{equation}
    % \min_{q\in\mbb{Q}}L_{\text{SCE}}(q, \bar{p}'_t) - \alpha\cdot\text{KL}\left( q||{p}_{0}\right) - (1-\alpha)\cdot\text{KL}\left( q||\bar{p}_{t}\right),
    % \begin{aligned}
        L(q_t) =L_{\text{CE}}(q_t, \bar{p}'_t) 
         + \alpha\cdot\text{KL}\left( q_t||q_{0}\right) + (1-\alpha)\cdot\text{KL}\left( q_t||\bar{q}_{t}\right),
    % \end{aligned}
    \label{eq:update_stu}
\end{equation}
where $\bar{p}'_t$ is the current augmented teacher model in Eq.~\eqref{eq:teacher_aug}, 
{and $p_1(\boldsymbol{\theta})\approx q_0(\boldsymbol{\theta})$, $\bar{p}_t(\boldsymbol{\theta})\approx \bar{q}_{t}(\boldsymbol{\theta})$.}
The KL term between two Gaussians can be computed in a closed form.
% and the expected log-likelihood requires further approximation.

\subsubsection{Teacher update via EMA}
\label{sec:ema}
% To omit drift, the teacher and source prior models only use the parameter mean to represent the exemplar in inference, and it is updated by EMA.
The teacher model is updated using EMA.
% The teacher model is used for inference and mainly uses the mean of each parameter distribution to forward.
Let $(\boldsymbol{\mu}, \boldsymbol{\sigma})$ and $(\bar{\boldsymbol{\mu}}, \bar{\boldsymbol{\sigma}})$ be the mean and standard deviation of the student and teacher model, respectively.
At test time, {the teacher model $\bar{q}_{t}(\boldsymbol{\theta})$} is updated by
\begin{align}
    \bar{\boldsymbol{\mu}} \leftarrow \beta \bar{\boldsymbol{\mu}} + (1-\beta)\boldsymbol{\mu}, \quad
    \bar{\boldsymbol{\sigma}} \leftarrow \beta \bar{\boldsymbol{\sigma}} + (1-\beta)\boldsymbol{\sigma}.
    \label{eq:update_tea}
\end{align}
Although the std is not used in the cross entropy to compute the likelihood, the teacher prior distribution is important to adjust the student distribution via the KL term.

\subsubsection{Model inference}

\label{sec:method.infe}

At any time, CTTA model needs to predict and adapt to the unlabeled test data.
In our VCoTTA, we also use the mixed prior to serve as the inference model.
That is, for a test data point $x$, the model inference is represented by
\begin{equation}
\begin{aligned}
p_t(x) = &\int p(x|\boldsymbol{\theta})p_t(\boldsymbol{\theta}) d\boldsymbol{\theta} 
\\= & \int \alpha p(x|\boldsymbol{\theta})p_1(\boldsymbol{\theta}) +(1-\alpha) p(x|\boldsymbol{\theta})\bar{p}_t(\boldsymbol{\theta})   d\boldsymbol{\theta},
\end{aligned}
\label{eq:inference}
\end{equation}
For the data prediction, the model only uses the expectation to reduce the stochastic, but leverages stochastic dynamics in domain adaptation.

\subsection{The algorithm}

\begin{figure}[t]
% \begin{wrapfigure}[13]{R}{0.57\linewidth}
%   \vspace{-15px}
%   \begin{flushright}
%     \begin{minipage}{1\linewidth}
\begin{algorithm}[H]
   \caption{Variational CTTA}
   \label{alg:vcotta}
\begin{algorithmic}[1]
   \STATE {\bfseries Input:} Source data $\mc{D}_0$, pretrained model $p_0(\boldsymbol{\theta})$, Unlabeled test data from different domain $\mc{U}_{1:T}$
   % \REPEAT
   \STATE $p_1(\boldsymbol{\theta})=\text{Variational warm-up}(\mc{D}_0, p_0(\boldsymbol{\theta}))$. // Alg.~\ref{alg:vwu} 
   \FOR{Domain shift $t=1$ {\bfseries to} $T$}
   \FOR{Test data $x\sim \mc{U}_t$}
   \STATE Model predict for $x$ (Eq.~\eqref{eq:inference})
   \STATE Update student model using $x$ (Eq.~\eqref{eq:update_stu})
   \STATE Update teacher model via EMA (Eq.~\eqref{eq:update_tea})
   \ENDFOR
   \ENDFOR
   % \UNTIL{$noChange$ is $true$}
\end{algorithmic}
\end{algorithm}
%     \end{minipage}
%   \end{flushright}
% \end{wrapfigure} 
\vspace{-20px}
\end{figure}

We illustrate the whole algorithm in Algorithm \ref{alg:vcotta}.
We first transform an off-the-shelf pretrained model into BNN via the variational warm-up strategy (Section~\ref{sec:warmup}).
After that, we obtain a BNN, and for each domain shift, we forward and adapt each test data point in an MT architecture.
For a data point $x$, we first predict the class label using the mixture of the source model and the teacher model (Section~\ref{sec:method.infe}).
Then, we update the student model using VI, where we use cross entropy to compute the entropy term and use the mixture of priors for the KL term (Section~\ref{sec:stu}).
Finally, we update the BNN teacher model via EMA (Section~\ref{sec:ema}).
% See more details in Appendix~\ref{app:rva}.
The process is feasible for any test data without labels.

Then, we show the whole CTTA algorithmic workflow utilizing variational approximation in VCoTTA:

\textbf{Before testing}: First, we adopt a variational warm-up strategy to inject stochastic dynamics into the model before adaptation.
Given the source dataset $\mc{D}_0$, we can use a variational approximation of $p(\boldsymbol{\theta}|\mathcal{D}_0)$ as Eq.~\eqref{eq:vwu},
% \begin{equation}
% \begin{aligned}
% p(\boldsymbol{\theta}|\mc{D}_0) &= p_1(\boldsymbol{\theta})\approx q_0(\boldsymbol{\theta})\\
% &=\arg\min_{q\in\mbb{Q}}\mathrm{KL}\left[q(\boldsymbol{\theta})\parallel\frac1{Z_0}p(\boldsymbol{\theta})p(\mc{D}_0|\boldsymbol{\theta})\right],
% \end{aligned}
% \end{equation}
where we use the pretrained deterministic model $p_0(\boldsymbol{\theta})$ as the prior distribution.

\textbf{During testing}:
We have the domain shift during the testing time, and we need to update the BNN in an MT structure.
Then, at the beginning of the test time,  we set the prior in task $t$ as $ p_t(\boldsymbol{\theta}) = \alpha\cdot p_1(\boldsymbol{\theta}) + (1-\alpha) \cdot\bar{p}_t(\boldsymbol{\theta})$ for the variational approximation, where $p_1(\boldsymbol{\theta})\approx q_0(\boldsymbol{\theta})$ and $\bar{p}_t(\boldsymbol{\theta}) \approx \bar{q}_{t}(\boldsymbol{\theta})$.
$\bar{q}_t(\boldsymbol{\theta})$ means the real-time posterior of the teacher model for the $t$-th test domain, is constantly updated by $q_t(\boldsymbol{\theta})$ via EMA (see Eq.~\eqref{eq:update_tea}). 
Note that we do not have $\bar{q}_t(\boldsymbol{\theta})$ for the first update in the $t$-th task, and use $q_{t-1}(\boldsymbol{\theta})$ construct the prior. 
% This is the variational distribution that should be used to approximate the prior in the absence of a teacher model in the first step, as well as the approximation that should be used when not employing the MT architecture.
% That is, using the posterior approximation from the previous domain to approximate the prior of the next domain is somehow like \cite{nguyen2018variational}.
This process is not required to inform the model that the domain produces a shift.
Then, with the approximation of the prior $p_t(\boldsymbol{\theta})$ , we get $q_t(\boldsymbol{\theta})$ for student model at the test domain $t$ as Eq.~\eqref{eq:varapprox}.
% \begin{equation}
%      q_t(\boldsymbol{\theta})= \arg\min_{q\in\mbb{Q}}\mathrm{KL}\left[q(\boldsymbol{\theta})\parallel\frac1{Z_t}p_{t}(\boldsymbol{\theta})e^{-{\lambda}{H}(\mc{U}_t|\boldsymbol{\theta})}\right],
% \end{equation}
% Then, with $q_0(\boldsymbol{\theta})$ and $\bar{q}_t(\boldsymbol{\theta})$, we can build the $\bar{p}_{t+1}(\boldsymbol{\theta})$ for next task by
% \begin{equation}
%     p(\boldsymbol{\theta}|\mathcal{U}_{1:t}\cup\mathcal{D}_0) = p_{t+1}(\boldsymbol{\theta}) = \alpha\cdot p_1(\boldsymbol{\theta}) + (1-\alpha) \cdot\bar{p}_{t+1}(\boldsymbol{\theta}) \approx \alpha\cdot q_0(\boldsymbol{\theta}) + (1-\alpha) \cdot\bar{q}_t(\boldsymbol{\theta}),
% \end{equation}
As a result, we recursively derive the following prior and posterior during the test time, thereby achieving the goal of VCoTTA.

\subsection{Comparisons with Existing Methods}

VCL~\cite{nguyen2018variational} is a classic continual learning (CL) study that uses VI.
VCL inspires our work but has the following differences.
First, VCL studies the supervised CL task, while our VCoTTA studies the unsupervised CTTA task. 
Second, CL only suffers from catastrophic forgetting (CF), while CTTA suffers from both CF and error accumulation.
Third, to conduct BI, one needs to compute prior and likelihood. For the prior, the current prior of VCL is set to be the previous posterior, while in CTTA such a prior may be unreliable. For the likelihood, VCL can directly compute likelihood, CTTA is under the unsupervised setting, thus in our work, we deduce the BI in CTTA using conditional entropy.
Last, to reduce error accumulation in the unsupervised scenario, we employ a mean-teacher update strategy using VI for the student model and EMA for the teacher model and compute a prior mixture to guide the student update.
Moreover, VCL maintains an extra coreset from the training set, while VCoTTA never stores any data during the test time.

PETAL~\cite{brahma2023probabilistic} also estimates uncertainties in CTTA via BI.
% Our work was also inspired by PETAL. 
The BI formulation is similar between PETAL and ours, which is derived from~\cite{grandvalet2004semi}, but PETAL uses a different method to conduct the inference.
First, PETAL only does not estimate the model uncertainties via BNN.
Second, PETAL ignores the unreliable prior in CTTA, and follows the VCL setting that still uses the previous posterior as the current prior.
Third, we conduct BI using VI while PETAL use SWAG~\cite{maddox2019simple}. SWAG has advantages in terms of computational efficiency during training, but might not handle unreliable priors as effectively as VI since it doesn't explicitly model the posterior distribution.
% (4) We have compared with PETAL in our experiment (see Tables~\ref{tab:cifar10c}, \ref{tab:cifar100c}, \ref{tab:imagenetc}), and our method outperforms PETAL on all datasets.

\section{Discussion and Theory Analysis}

\subsection{BI during continual testing time}
\label{app:bi}
In this subsection, we derive the BI representation of CTTA in Eq.~\eqref{eq:bi4ctta}.
First of all, the goal of CTTA is to learn a posterior distribution $p(\boldsymbol{\theta}|\mathcal{U}_{1:T}\cup\mathcal{D}_0)$ from a source dataset $\mc{D}_0$, and a sequence of unlabeled test data from $\mc{U}_1$ to $\mc{U}_T$.
Assuming the source dataset $\mc{D}_0$ is drawn from a distribution $\phi$, and $\tilde{\phi}_t$ is the $t$-th shift of the unlabeled test data. 
Let the parameters of the model be $\boldsymbol{\theta}$, % The same classifier $\boldsymbol{\theta}$ is used to generate labels for the source dataset and the unlabeled test dataset. 
then following the semi-supervised learning framework~\cite{grandvalet2004semi}, we incorporate all input-generating distributions into the belief over the model parameters $\boldsymbol{\theta}$ as follows
\begin{equation}
    \begin{aligned}
        p(\boldsymbol{\theta}| \phi , \tilde{\phi}_1, \cdots, \tilde{\phi}_T) 
        \propto ~ & p(\boldsymbol{\theta})\exp\left(-\lambda_0H_{\boldsymbol\theta, \phi}(Y|{X})\right)\\ ~ &\prod_{t=1}^{T}\exp\left(-\lambda_tH_{\boldsymbol\theta, \tilde{\phi}_t}(Y|{X})\right),
        % & \propto p(\boldsymbol{\theta}| \phi , \tilde{\phi}_1, \cdots, \tilde{\phi}_{T-1})\exp\left(-\lambda_TH_{\boldsymbol\theta, \tilde{\phi}_T}(Y|\boldsymbol{X})\right),
    \end{aligned}
    \label{eq:ded19}
\end{equation}
{where the inputs $X$ are sampled i.i.d. from a generative model with parameters $\phi$, while the corresponding labels $Y$ are sampled from a conditional distribution $p(Y |X, \boldsymbol{\theta})$, which is parameterized by the model parameters $\boldsymbol{\theta}$}.
$p(\boldsymbol{\theta})$ is the prior distribution over $\boldsymbol{\theta}$, and
$\{\lambda_0,\lambda_1,\cdots,\lambda_T\}$ are the factors for approximation weighting.

In Eq.~\eqref{eq:ded19}, the term $H_{\boldsymbol\theta, \phi}(Y|\boldsymbol{X})$ represents the cross entropy of the supervised learning, and the term $H_{\boldsymbol\theta, \tilde{\phi}_t}(Y|\boldsymbol{X})$ for $t>0$ denotes the Shannon entropy of the unsupervised learning.
Following \cite{zhou2021training}, we can empirically use a point estimation to get a plug-in Bayesian approach to approximate the above formula:
\begin{equation}
\begin{aligned}
    &p(\boldsymbol{\theta}|\mathcal{U}_{1:T}\cup\mathcal{D}_0) \\
    \propto \quad & p(\boldsymbol{\theta})\prod_{\forall x, y \in \mc{D}_0}p(y|x,\boldsymbol{\theta})\exp\left(-\frac{\lambda_0}{|\mc{D}_0|}\sum_{\forall x \in \mc{D}_0}{H}(y|{x}, \boldsymbol{\theta})\right) \\ 
    &\prod_{t=1}^{T} \exp\left(-\frac{\lambda_t}{|\mc{U}_t|}\sum_{\forall x \in \mc{U}_t}{H}(y|{x}, \boldsymbol{\theta})\right).
\end{aligned}
\end{equation}
To make the formula feasible to CTTA, that is, no source data is available at the test time, we set $\lambda_0 = 0$. 
Since the source knowledge can be represented by $p(\boldsymbol{\theta}|\mc{D}_0) \propto p(\boldsymbol{\theta})\prod_{\forall x, y \in \mc{D}_0}p(y|x,\boldsymbol{\theta})$, for the $t$-th test domain, the Bayesian inference in CTTA can be represented as follows:
\begin{equation}
    \begin{aligned}
        &p(\boldsymbol{\theta}|\mathcal{U}_{1:t}\cup\mathcal{D}_0) \\\propto ~ & p(\boldsymbol{\theta}|\mc{D}_0)\prod_{i=1}^{t} \exp\left(-\frac{\lambda_i}{|\mc{U}_i|}\sum_{\forall x \in \mc{U}_i}{H}(y|{x}, \boldsymbol{\theta})\right) \\
        \propto ~ & p(\boldsymbol{\theta}|\mathcal{U}_{1:t-1}\cup\mathcal{D}_0)\exp\left(-\frac{\lambda_t}{|\mc{U}_t|}\sum_{\forall x \in \mc{U}_t}{H}(y|{x}, \boldsymbol{\theta})\right),
    \end{aligned}
\end{equation}
where $H(\mc{U}_t|\boldsymbol{\theta}) = \cfrac{1}{|\mc{U}_t|}\sum_{\forall x \in \mc{U}_t}{H}(y|{x}, \boldsymbol{\theta})$ and the above formula can be rewritten in simplicity as
\begin{equation}
\begin{aligned}
    p(\boldsymbol{\theta}|\mathcal{U}_{1:t}\cup\mathcal{D}_0) &\propto p(\boldsymbol{\theta}|\mathcal{U}_{1:t-1}\cup\mathcal{D}_0) e^{-\lambda H(\mc{U}_t|\boldsymbol{\theta})}\\ &= p_{t}(\boldsymbol{\theta})e^{-\lambda H(\mc{U}_t|\boldsymbol{\theta})},
\end{aligned}
\end{equation}
which specifies the Bayesian inference process on continuously arriving unlabeled data in CTTA.

\subsection{Upper Bound of the Mixture of Two KL Divergencies}

\label{app:mix}

We refer to the lemma that was stated for the mixture of Gaussian in \cite{singer1998batch}.
\begin{lemma}
    % \label{lemma:kl of mix distributions}
The KL divergence between mixture distributions $p = \sum_{i=1}^{k}\alpha_i p_i$ and $p' = \sum_{i=1}^{k}\alpha_i p'_i$ is upper-bounded by
    \begin{equation}
        \mathrm{KL}(p\parallel p') \le \mathrm{KL}(\boldsymbol{\alpha}\parallel\boldsymbol{\alpha}') + \sum_{i=1}^{k}\alpha_i\mathrm{KL}(p_i\parallel p'_i),
    \end{equation}
    where $\boldsymbol{\alpha} = (\alpha_1, \alpha_2, \cdots, \alpha_k)$ and $\boldsymbol{\alpha}' = (\alpha'_1, \alpha'_2, \cdots, \alpha'_k)$ are the weights of the mixture components. The equality holds if and only if ${\alpha_i p_i}/{\sum_{j=1}^{k}\alpha_j p_j} = {\alpha'_i p'_i}/{\sum_{j=1}^{k}\alpha'_j p'_j}$ for all $i$.
\end{lemma}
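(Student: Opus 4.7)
The plan is to prove the inequality via the log-sum inequality applied pointwise inside the KL integral, then integrate to recover the stated decomposition. Recall the log-sum inequality: for non-negative reals $\{a_i\}$ and positive reals $\{b_i\}$, $\sum_i a_i \log(a_i/b_i) \ge (\sum_i a_i) \log((\sum_i a_i)/(\sum_i b_i))$, with equality iff all ratios $a_i/b_i$ are equal. (Note: I read the statement as $p=\sum_i \alpha_i p_i$ and $p'=\sum_i \alpha'_i p'_i$, since the RHS contains $\mathrm{KL}(\boldsymbol{\alpha}\parallel\boldsymbol{\alpha}')$; the displayed formula in the excerpt appears to have a typo reusing $\alpha_i$ for both mixtures.)

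First I would write out the definition
\begin{equation*}
\mathrm{KL}(p\parallel p') = \int \Bigl(\sum_i \alpha_i p_i(x)\Bigr) \log \frac{\sum_i \alpha_i p_i(x)}{\sum_i \alpha'_i p'_i(x)}\, dx,
\end{equation*}
and for each fixed $x$ set $a_i = \alpha_i p_i(x)$ and $b_i = \alpha'_i p'_i(x)$. Applying the log-sum inequality to these non-negative sequences gives the pointwise bound
\begin{equation*}
\Bigl(\sum_i \alpha_i p_i(x)\Bigr) \log \frac{\sum_i \alpha_i p_i(x)}{\sum_i \alpha'_i p'_i(x)} \;\le\; \sum_i \alpha_i p_i(x) \log \frac{\alpha_i p_i(x)}{\alpha'_i p'_i(x)}.
\end{equation*}

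Next I would integrate both sides over $x$, split the logarithm via $\log(\alpha_i p_i/\alpha'_i p'_i) = \log(\alpha_i/\alpha'_i) + \log(p_i/p'_i)$, and use $\int p_i(x)\, dx = 1$. The first piece yields $\sum_i \alpha_i \log(\alpha_i/\alpha'_i) = \mathrm{KL}(\boldsymbol{\alpha}\parallel\boldsymbol{\alpha}')$ and the second piece yields $\sum_i \alpha_i \mathrm{KL}(p_i\parallel p'_i)$, producing the claimed upper bound. Finally, the equality clause follows from the equality condition of the log-sum inequality applied at (almost) every $x$: the ratios $\alpha_i p_i(x)/(\alpha'_i p'_i(x))$ must be independent of $i$, which, after normalization by $\sum_j$, is exactly the stated condition $\alpha_i p_i/\sum_j \alpha_j p_j = \alpha'_i p'_i/\sum_j \alpha'_j p'_j$.

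The main obstacle is essentially bookkeeping rather than a genuine difficulty: one must be careful to apply the log-sum inequality in the correct direction (it gives an upper bound on the aggregated log-ratio, which is what we need) and to justify the pointwise-to-integral step under mild integrability of the involved densities. Measure-zero handling for the equality case is also routine. No further machinery beyond the log-sum inequality (equivalently, convexity of $(u,v)\mapsto u\log(u/v)$) is required, so the proof is short once the setup is in place.
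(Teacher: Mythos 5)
Your proposal is correct and follows essentially the same route as the paper: apply the log-sum inequality pointwise to the mixture log-ratio, integrate, and split the logarithm into the weight-ratio and density-ratio parts to obtain $\mathrm{KL}(\boldsymbol{\alpha}\parallel\boldsymbol{\alpha}')$ and $\sum_i\alpha_i\mathrm{KL}(p_i\parallel p'_i)$. You also correctly identified the typo in the statement (the second mixture should use $\alpha'_i$), which the paper's own derivation implicitly assumes when it writes $\log(\alpha_i/\alpha'_i)$.
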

\begin{proof}
By the log-sum inequality~\cite{cover1999elements}, we have
    \begin{equation*}
        \begin{aligned}
            & \mathrm{KL}(\sum_{i=1}^{k}\alpha_i p_i\parallel\sum_{i=1}^{k}\alpha_i p'_i) 
            =  \int\left(\sum_{i=1}^{k}\alpha_i p_i\right) \log\cfrac{\sum_{i=1}^{k}\alpha_i p_i}{\sum_{i=1}^{k}\alpha_i p'_i} \\
            \leq ~ & \int\sum_{i=1}^{k}\alpha_i p_i \log\cfrac{\alpha_i p_i}{\alpha_i p'_i} 
            =  \sum_{i=1}^{k}\alpha_i \left(\int p_i \log\cfrac{\alpha_i}{\alpha_i'} + \int p_i \log\cfrac{p_i}{p'_i}\right) \\
            = ~ & \mathrm{KL}(\boldsymbol{\alpha}\parallel\boldsymbol{\alpha}') + \sum_{i=1}^{k}\alpha_i\mathrm{KL}(p_i\parallel p'_i).
        \end{aligned}
    \end{equation*}
\end{proof}
In our algorithm, $q(\boldsymbol{\theta})$ is set to be a mixture of Gaussian distributions, \ie, $p_t(\boldsymbol{\theta}) = \alpha\cdot p_1(\boldsymbol{\theta}) + (1-\alpha) \cdot\bar{p}_t(\boldsymbol{\theta})$.
In the above inequality, let $q(\boldsymbol{\theta}) = \sum_{i=1}^{k}\alpha_i q(\boldsymbol{\theta})$, we can get the upper bound of the KL divergence between $q(\boldsymbol{\theta})$ and $p_{t}(\boldsymbol{\theta})$:
\begin{equation}
    \mathrm{KL}(q\parallel p_{t}) \leq \alpha\cdot\text{KL}\left( q||{p}_{1}\right) + (1-\alpha)\cdot\text{KL}\left( q||\bar{p}_{t}\right).
\end{equation}
So the lower bound in Eq.~\eqref{eq:elbo} can be redefined as
\begin{equation}
    \begin{aligned}
        \text{ELBO} = &-\lambda\mathbb{E}_{\boldsymbol{\theta}\sim q(\boldsymbol{\theta})}H(\mc{U}_t|\boldsymbol{\theta}) - \mathrm{KL}\left(q(\boldsymbol{\theta})\parallel p_{t}(\boldsymbol{\theta})\right) \\
        \geq &-\lambda\mathbb{E}_{\boldsymbol{\theta}\sim q(\boldsymbol{\theta})}H(\mc{U}_t|\boldsymbol{\theta}) - \alpha\cdot\text{KL}\left( q||{p}_{1}\right) \\ &- (1-\alpha)\cdot\text{KL}\left( q||\bar{p}_{t}\right).
        % \\ & \stackrel{\text{def}}{=} \mc{L}',
        % &\approx -\lambda\mathbb{E}_{\boldsymbol{\theta}\sim q(\boldsymbol{\theta})}H(\mc{U}_t|\boldsymbol{\theta}) - \alpha\cdot\text{KL}\left( q||{q}_{0}\right) - (1-\alpha)\cdot\text{KL}\left( q||\bar{q}_{t-1}\right) \stackrel{\text{def}}{=} \mc{L}',  
    \end{aligned}
\end{equation}
Then, we have obtained a lower bound that can be optimized as the source prior distribution $q_0(\boldsymbol{\theta})$ and the teacher prior distribution $\bar{q}_{t}(\boldsymbol{\theta})$ are multivariate Gaussian distributions.
% which means we can also optimize $\mc{L}'$ with .
% In our case, the former entropy term can be more effectively replaced by the cross entropy or symmetric cross entropy (SCE) between the student model and the teacher model in a mean-teacher architecture (see Section~\ref{sec:mt}). 
% For the latter KL term, we can substitute a variational approximation that we deem closest to the current-stage prior $p_t(\boldsymbol{\theta})$ into the KL divergence. 
Specifically, when the priors are multivariate Gaussian distributions, the KL term can be computed in closed form as
\begin{equation}
\begin{aligned}
    &\mathrm{KL}\left(\mc{N}(\boldsymbol{\mu}_1, \boldsymbol{\Sigma}_1)\parallel\mc{N}(\boldsymbol{\mu}_2, \boldsymbol{\Sigma}_2)\right) \\
    = &\frac{1}{2} \Bigg( \text{tr}(\boldsymbol{\Sigma}_2^{-1} \boldsymbol{\Sigma}_1) + (\boldsymbol{\mu}_2 - \boldsymbol{\mu}_1)^\top \boldsymbol{\Sigma}_2^{-1} (\boldsymbol{\mu}_2 - \boldsymbol{\mu}_1) \\&- k + \ln \left( \frac{\det(\boldsymbol{\Sigma}_2)}{\det(\boldsymbol{\Sigma}_1)} \right) \Bigg),
\end{aligned}
    \label{eq:gausskl}
\end{equation}
where $\boldsymbol{\Sigma} = \text{diag}(\boldsymbol{\sigma^2})$, $k$ represents the dimensionality of the distributions, $\text{tr}(\cdot)$ denotes the trace of a matrix, and $\det(\cdot)$ stands for the determinant of a matrix.
% For the case that the prior is a mixture of Gaussian distributions, we can refer to the next section to get its upper bound. 

\subsection{Advantage of the Mixture of Gaussian Prior}

\label{app:whymix}

In this subsection, we illustrate why the mixture of Gaussian prior are beneficial to CTTA. 
We first define what is a better distribution for CTTA.
Assume there exists an ideal prior distribution $\hat{p}_t$, which effectively represents the distribution of the model after learning all past knowledge, including that from the source and unlabeled datasets. Then we can use the difference between a distribution and the ideal distribution $\hat{p}_t$ (here we use KL divergence) to measure the goodness of a distribution, i.e., $\text{KL}(\cdot||\hat{p}_t)$.

Generally, neither the source prior $p_1$ (trained on labeled data) nor the adapted prior $\bar{p}_t$ (adapt on unlabeled data, being unreliable) can be completely consistent with $\hat{p}_t$. Considering that, as $t$ increases, the difference between $\bar{p}_t$ and $\hat{p}_t$ will increase without an upper bound due to the error accumulation (since $t$ is infinitely growing). The source prior $p_1$ cannot adapt to the unlabeled data, but it contains important information from the labeled data, and the ideal distribution cannot forget the source information too much, so we assume that the difference between $p_1$ and $\hat{p}_t$ is a constant, i.e., $\text{KL}(p_1||\hat{p}_t)<U$, where $U$ is a constant upper bound.
Accordingly, it can be considered that mixing the source prior $p_1$ and the adapted prior $\bar{p}_t$ in some way is beneficial for reducing $\text{KL}(\cdot||\hat{p}_t)$.

In our method, we use Gaussian mixture $p_t = \alpha_t p_1 + (1-\alpha_t) \bar{p}_t$, where $\alpha$ is computed by Eq.~\eqref{eq:alpha}. It is easy to illustrate the benefits of this idea using the following inequality:
\begin{equation}
\begin{aligned} 
\text{KL}(p_t||\hat{p}_t) 
&= \text{KL}\left[(\alpha_t p_1 + (1-\alpha_t) \bar{p}_t)||\hat{p}_t\right]\\ 
&\leq \alpha_t \text{KL}(p_1||\hat{p}_t) + (1-\alpha_t) \text{KL}(\bar{p}_t||\hat{p}_t)\\ 
&\leq \alpha_t U + (1-\alpha_t) \text{KL}(\bar{p}_t||\hat{p}_t). 
\end{aligned} 
\label{eq:kl_ineq}
\end{equation}
If $\text{KL}(\bar{p}_t||\hat{p}_t) \geq U$, which can be satisfied as mentioned above, then we have $ \text{KL}(p_t||\hat{p}_t) \leq \text{KL}(\bar{p}_t||\hat{p}_t)$.
This indicates that the mixed distribution $p_t$ is closer to the ideal distribution $\hat{p}_t$ than the adapted prior $\bar{p}_t$.
A similar idea can be found in the stochastic restoration in CoTTA~\cite{wang2022continual}, which randomly restores parts of the parameters of the current model into the parameters of the source model.

\section{Experiment}
\label{sec:ex}

\subsection{Experimental Setting}

\label{sec:ex.set}

\subsubsection{Dataset}
We use CIFAR10-to-CIFAR10C, CIFAR100-to-CIFAR100C, and ImageNet-to-ImageNetC as benchmarks to assess the robustness of classification models. Each dataset comprises 15 distinct types of corruption, each applied at severity 5. These corruptions are systematically applied to test images from the original CIFAR10 and CIFAR100 datasets, as well as validation images from the original ImageNet dataset.
For simplicity in tables, we use C1 to C15 to represent the 15 types of corruption, \ie,
C1: Gaussian, C2: Shot, C3: Impulse C4: Defocus, C5: Glass, C6: Motion, C7: Zoom, C8: Snow, C9: Frost, C10: Fog, C11: Brightness, C12: Contrast, C13: Elastic, C14: Pixelate, C15: Jpeg.

\begin{table*}[t]
\caption{Classification error rates (\%) for the standard CIFAR10-to-CIFAR10C and CIFAR100-to-CIFAR100C CTTA tasks. 
All models are evaluated under the highest corruption severity level (level 5) in an online adaptation setting. 
C1 to C15 denote the 15 corruption types described in Section~\ref{sec:ex.set}. 
The best result is highlighted in \textbf{bold}, and the second-best result is underlined.
% C1: Gaussian, C2: Shot, C3: Impulse C4: Defocus, C5: Glass, C6: Motion, C7: Zoom, C8: Snow, C9: Frost, C10: Fog, C11: Brightness, C12: Contrast, C13: Elastic, C14: Pixelate, C15: Jpeg.
}
\label{tab:cifar10c}
% \vskip -0.25in
\vspace{-15px}
\begin{center}
\resizebox{\linewidth}{!}{
\begin{tabular}{clrcccccccccccccccc}
\toprule
&\textbf{Method} & \textbf{Venue} & \textbf{C1} & \textbf{C2} & \textbf{C3} & \textbf{C4}  & \textbf{C5} & \textbf{C6} & \textbf{C7} & \textbf{C8} & \textbf{C9} & \textbf{C10} & \textbf{C11} & \textbf{C12} & \textbf{C13} & \textbf{C14} & \textbf{C15} & \textbf{Avg}\\
\midrule
\multirow{17}{*}{\rotatebox{90}{CIFAR10-to-CIFAR10C}}&Source   &{-}& 72.3 & 65.7 & 72.9 & 46.9 & 54.3 & 34.8 & 42.0 & 25.1 & 41.3 & 26.0 & 9.3 & 46.7  &26.6 & 58.5 & 30.3 & 43.5 \\
&TENT~\cite{wang2020tent}&ICLR~2021 & 24.8 & 20.6 & 28.6 &	14.4 & 31.1 & 16.5 & 14.1 & 19.1 & 18.6 & 18.6 & 12.2 & 20.3 & 25.7 & 20.8 & 24.9 & 20.7\\
&Ada~\cite{AdaContrast}&CVPR~2022  & 29.1 & 22.5 & 30.0 & 14.0 & 32.7 & 14.1 & 12.0 & 16.6 & 14.9 & 14.4 & 8.1 & 10.0 & 21.9 & 17.7 & 20.0 & 18.5\\
&CoTTA~\cite{wang2022continual}&CVPR~2022      & 24.3 & 21.3 & 26.6 & 11.6 & 27.6 & 12.2 & 10.3 & 14.8 & 14.1 & 12.4 & 7.5 & 10.6 & 18.3 & 13.4 & 17.3 & 16.2 \\
&EATA~\cite{EATA}&ICML~2022&27.3&20.3&28.6&13.5&27.8&14.6&13.5&17.9&19.0&16.4&10.7&14.6&20.8&16.4&19.1&18.7 \\
&RoTTA~\cite{yuan2023robust}    & CVPR~2023 &30.3&25.4&34.6&18.3&34.0&14.7&11.0&16.4&14.6&14.0&8.0&12.4&20.3&16.8&19.4&19.3 \\
&PETAL~\cite{brahma2023probabilistic}&CVPR~2023&26.0&18.9&27.2&12.1&26.4&13.2&12.1&16.5&18.5&15.0&10.2&13.2&19.4&15.0&17.6&17.3 \\
&RMT \cite{dobler2023robust}&CVPR~2023&24.0 & 20.4 & 25.6 & 12.6 & 25.4 & 14.2 & 12.2 &15.4 &15.1 &14.1 &10.3 &13.7 &17.1 &13.5 &16.0 &16.7\\
&BeCOTTA~\cite{becotta}&ICLR~2024&22.9&19.1&26.9&\underline{10.2}&27.5&12.7&10.4&14.7&14.3&12.4&\underline{7.2}&\textbf{9.4}&20.9&15.2&20.2&16.3 \\
&SANTA~\cite{chakrabarty2023sata}    & TMLR~2023&23.9&20.1&28.0&11.6&27.4&12.6&10.2&14.1&13.2&12.2&7.4&10.3&19.1&13.3&18.5&16.1 \\
&SWA~\cite{yang2023exploring}    & IJCAI~2023&23.9&20.5&24.5&11.2&26.3&\underline{11.8}&\underline{10.1}&14.0&\underline{12.7}&\underline{11.5}&7.6&9.5&17.6&\underline{12.0}&15.8&15.3 \\
&DSS~\cite{wang2024continual}&WACV~2024&24.1&21.3&25.4&11.7&26.9&12.2&10.5&14.5&14.1&12.5&7.8&10.8&18.0&13.1&17.3&16.0 \\
&OBAO~\cite{obao}&ECCV~2024&23.6&19.9&26.0&11.8&25.3&13.2&10.9&14.3&13.5&12.7&9.0&11.9&17.1&12.7&15.9&15.8\\
&PALM~\cite{palm}&AAAI~2025&25.9&\underline{18.1}&\underline{22.7}&12.4&25.3&13.2&10.8&13.5&13.2&12.2&8.5&11.9&17.9&12.0&\underline{15.5}&15.5 \\
&SURGEON~\cite{surgeon}&CVPR~2025&25.3&20.0&27.4&12.0&28.0&13.7&11.1&15.3&14.3&14.1&8.2&11.8&19.2&13.5&18.7&16.8\\
&TCA~\cite{CTTA-TCA}&CVPR~2025&\underline{22.6}&19.4&23.2&11.0&\underline{23.4}&12.0&\underline{10.1}&\underline{13.3}&13.5&11.9&7.9&11.0&\underline{16.7}&12.3&15.7&\underline{14.9} \\
\cmidrule{2-19}
& VCoTTA    & Ours&\textbf{18.1} &\textbf{14.9} &\textbf{22.0} &\textbf{9.7} &\textbf{22.6} &\textbf{11.0}&\textbf{9.5}&\textbf{11.4}&\textbf{10.6}&\textbf{10.5}&\textbf{6.5}&\textbf{9.4}&{\textbf{15.6}}&\textbf{11.0}&\textbf{14.5}&\textbf{13.1} \\
\midrule
\multirow{17}{*}{\rotatebox{90}{CIFAR100-to-CIFAR100C}}
&Source  &- & 73.0&	68.0&	39.4&	29.3&	54.1&	30.8&	28.8&	39.5&	45.8&	50.3&	29.5&	55.1&	37.2	&74.7&	41.2&	46.4\\

&TENT~\cite{wang2020tent}&ICLR~2021  & 37.2 & 35.8 & 41.7 & 37.9 & 51.2 & 48.3 & 48.5 & 58.4 & 63.7 & 71.1 & 70.4 & 82.3 & 88.0 & 88.5 & 90.4 & 60.9 \\

&Ada~\cite{AdaContrast}&CVPR~2022 & 42.3 & 36.8 & 38.6 & 27.7 & 40.1 & 29.1 & 27.5 & 32.9 & 30.7 & 38.2 & 25.9 & 28.3 & 33.9 & 33.3 & 36.2 & 33.4 \\

&CoTTA \cite{wang2022continual}&CVPR~2022& 40.1 & 37.7 & 39.7 & 26.9 & 38.0 & 27.9 & 26.4 & 32.8 & 31.8 & 40.3 & 24.7 & 26.9 & 32.5 & 28.3 & 33.5 & 32.5 \\
&EATA \cite{EATA}&ICML~2022&46.7&34.5&42.4&25.9&36.8&31.6&25.3&35.6&33.8&41.1&\underline{22.6}&28.6&36.6&28.4&33.5&33.6 \\
&RoTTA~\cite{yuan2023robust}    & CVPR~2023&49.1&44.9&45.5&30.2&42.7&29.5&26.1&32.2&30.7&37.5&24.7&26.9&32.5&28.3&33.5&32.5 \\
&PETAL~\cite{brahma2023probabilistic}&CVPR~2023&42.1&38.6&38.7&28.1&38.1&28.1&27.5&31.2&30.7&37.2&26.4&28.7&29.4&28.6&31.6&32.8\\
&RMT~\cite{dobler2023robust}&CVPR~2023&40.5 &36.1 &36.3 &27.7 &\underline{33.9} &28.5 &26.4 &\underline{29.0} &29.0 &32.5 &25.1 &27.4 &28.2 &26.3 &29.3 &30.4\\
&BeCoTTA~\cite{becotta}&ICLR~2024&42.1 &38.0 &42.2 &30.2&42.9&31.7&29.8&35.1&33.9&38.5&27.9&32.0&36.7&31.6&39.9 & 35.5\\
&SANTA~\cite{chakrabarty2023sata}    &TMLR~2023&\underline{36.5}&33.1&\underline{35.1}&25.9&34.9&27.7&25.4&29.5&29.9&33.1&23.6&26.7&31.9&27.5&35.2&30.3 \\
&SWA~\cite{yang2023exploring}    & IJCAI~2023&39.4&36.4&37.4&\underline{25.0}&36.0&\underline{26.6}&25.0&29.1&28.4&35.0&23.5&\underline{25.1}&\underline{28.5}&25.8&\textbf{29.6}&30.0 \\
&DSS \cite{wang2024continual}&WACV~2024  &39.7&36.0&37.2&26.3&35.6&27.5&25.1&31.4&30.0&37.8&24.2&26.0&30.0&26.3&31.3&30.9 \\
&OBAO~\cite{obao}&ECCV~2024&38.8&35.0&35.4&26.7&\textbf{33.2}&27.4&25.4&29.4&28.8&32.8&24.1&26.1&30.9&26.9&32.0&30.2\\
&PALM~\cite{palm}&AAAI~2025&{37.3}&\textbf{32.5}&\textbf{34.9}&26.2&35.3&27.5&\underline{24.6}&29.1&29.2&34.1&23.5&27.0&31.1&26.6&34.1&30.2 \\
&SURGRON~\cite{surgeon}&CVPR~2025&40.3&35.6&38.1&26.4&36.7&28.1&25.5&31.2&30.0&35.6&24.8&27.2&31.3&28.2&36.1&31.7\\
&TCA \cite{CTTA-TCA}&CVPR~2025&38.5&36.0&36.6&25.8&34.8&27.2&25.1&30.5&\underline{27.8}&\underline{30.5}&24.1&25.7&\textbf{27.3}&26.6&30.5 &\underline{29.8}\\
\cmidrule{2-19}
& VCoTTA & Ours &\textbf{35.3}&\underline{32.8}&38.9&\textbf{23.8}&{34.6}&\textbf{25.5}&\textbf{23.2}&\textbf{27.5}&\textbf{26.7}&\textbf{30.4}&\textbf{22.1}&\textbf{23.0}&\underline{28.1}&\textbf{24.2}&\underline{30.4}&\textbf{28.4} \\
\bottomrule
\end{tabular}
}
\end{center}
% \vskip -0.2in
\vspace{-10px}
\end{table*}

\subsubsection{Pretrained Model}
Following previous studies~\cite{wang2020tent,wang2022continual}, we adopt pretrained WideResNet-28~\cite{zagoruyko2016wide} model for CIFAR10to-CIFAR10C, pretrained ResNeXt-29~\cite{xie2017aggregated} for CIFAR100-to-CIFAR100C, and standard pretrained ResNet-50~\cite{he2016deep} for ImageNet-to-ImagenetC. 
Note in our VCoTTA~\cite{wang2022continual}, we further warm up the pretrained model to obtain the stochastic dynamics for each dataset.
Similar to CoTTA, we update all the trainable parameters in all experiments. 
The augmentation number is set to 32 for all compared methods that use the augmentation strategy.
% We adopt the same augmentation confidence threshold described in~\cite{wang2022continual}.

% \subsection{Methods to be Compared}

\subsection{Comparison Results}

\label{sec:ex.major}

\lyu{We compare our VCoTTA with multiple state-of-the-art (SOTA) methods, including 
TENT~\cite{wang2020tent},
Ada~\cite{AdaContrast}
CoTTA~\cite{CTTA-CoTTA}, 
EATA~\cite{EATA},
RoTTA~\cite{yuan2023robust}
% RDumb~\cite{RDumb},
PETAL~\cite{brahma2023probabilistic},
RMT~\cite{dobler2023robust}, 
BeCOTTA~\cite{becotta},
SANTA~\cite{chakrabarty2023sata},
SWA~\cite{yang2023exploring},
DSS~\cite{wang2024continual},
OBAO~\cite{obao},
PALM~\cite{palm},
SURGEON~\cite{surgeon},
and TCA~\cite{CTTA-TCA}.
\textsc{Source} denotes the baseline pre-trained model without any adaptation. 
% BN~\cite{li2017learning,schneider2020improving} keeps the network parameters frozen, but only updates Batch Normalization. 
% % Pseudo-label updates the BatchNorm trainable parameters with hard pseudo-labels [19]. 
% TENT~\cite{wang2020tent} updates via Shannon entropy for unlabeled test data.
% CoTTA~\cite{wang2022continual} builds the MT structure and uses randomly restoring parameters to the source model. 
% SANTA~\cite{chakrabarty2023sata} modifies the batch-norm affine parameters using source anchoring-based self-distillation to ensure the model incorporates knowledge of newly encountered domains while avoiding catastrophic forgetting.
% SWA~\cite{yang2023exploring} refines the pseudo-label learning process from the perspective of the instantaneous and long-term impact of noisy pseudo-labels.
% PETAL~\cite{brahma2023probabilistic} tries to estimate the uncertainty in CTTA, which is similar to BNN, but it ignores the unreliable prior problem.
% RMT~\cite{dobler2023robust} combines symmetric cross-entropy with contrastive learning in CTTA.
All compared methods adopt the same backbone, pre-trained model and hyperparameters.}

% We show the major comparisons with the SOTA methods in \textit{Tables}~\ref{tab:cifar10c}, \ref{tab:cifar100c} and \ref{tab:imagenetc}.
% We have the following observations.
% First, no adaptation at the test time (\textsc{Source}) suffers from series domain shift, which shows the necessity of the CTTA.
% Second, traditional TTA methods that ignore the continual shift in test time perform poorly such as \textsc{TENT} and \textsc{BN}.
% We also find that simple Shannon entropy is effective in the first several domain shifts, especially in complex 1,000-classes ImageNetC, but shows significant performance drops in the following shifts.
% Third, the mean-teacher structure is very useful in CTTA, such as \textsc{CoTTA} and \textsc{PETAL}, which means that the pseudo-label is useful in domain shift.
% In the previous method, the error accumulation leads to the unreliable pseudo labels, then the model may get more negative transfers in CTTA along the timeline.
% The proposed \textsc{VCoTTA} outperforms other methods on all the three datasets, such as 13.1\% vs. 15.3\% (\textsc{SWA}) on CIFAR10C, 28.4\% vs. 30.0\% (\textsc{SWA}) on CIFAR100C and {64.2\% vs. 66.7\% (\textsc{CoTTA}) on ImageNetC}.
% We hold the opinion that the prior will inevitably drift in CTTA, but \textsc{VCoTTA} slows down the process via the prior mixture.
% We also find that the superiority is more obvious in the early adaptation, which may be influenced by the different corruption orders.
% We analyze the order problem in Section~\ref{app:10order}.

\lyu{We present major comparisons with state-of-the-art methods on {CIFAR10C}, {CIFAR100C} and ImageNetC, and summarize the main findings. 
First, source-only models without adaptation suffer heavily from domain shifts, which clearly demonstrates the necessity of CTTA. 
Second, traditional TTA methods that ignore continual shifts, such as TENT, perform poorly in this setting. 
Third, mean-teacher based approaches including CoTTA and PETAL confirm the usefulness of pseudo-labeling, although they remain vulnerable to error accumulation, which gradually produces unreliable pseudo labels and negative transfer as adaptation progresses. 
As shown in Table~\ref{tab:cifar10c}, on {CIFAR10C} and {CIFAR100C}, VCoTTA achieves consistent and clear improvements over baselines. 
For example, VCoTTA reduces the error rate to 13.1\% compared with 15.3\% from SWA on CIFAR10C, and to 28.4\% compared with 30.0\% from SWA on CIFAR100C. 
These gains highlight the effectiveness of explicitly modeling uncertainty and leveraging the prior mixture mechanism, particularly in settings where long-term adaptation and moderately large label spaces make error accumulation a critical challenge. 
On the more challenging {ImageNetC} benchmark with 1,000 classes (Fig.~\ref{fig:inc}), VCoTTA delivers competitive results, obtaining 64.2\% compared with 66.7\% from CoTTA. 
% Although the margin is smaller, the method still maintains comparable performance while preserving all key modules of our framework. 
A complementary stacked analysis of corruption groups further shows that VCoTTA distributes errors more evenly, suggesting that it accumulates errors more slowly under diverse and long-term shifts. 
These results confirm that priors inevitably drift during continual adaptation. 
By explicitly modeling uncertainty and adaptively combining priors, VCoTTA slows down this drift and improves robustness. 
The performance gains are especially pronounced on CIFAR10C and CIFAR100C, which serve as strong evidence of the method’s effectiveness.}

% \subsection{Ablation Study}

% We evaluate the two components in our methods in Table~\ref{tab:ablation}, \ie, the Variational Warm-Up (VWU) and the Symmetric Cross-Entropy (SCE) via ablation.
% The ablation results show that the two components are both important for \textsc{VCoTTA}.
% First, the VWU is used to inject stochastic dynamics into an off-the-shelf pretrained model.
% Without the VWU, the performance of \textsc{VCoTTA} drops to 18.4\% from 13.9\% on CIFAR10C, 31.5\% from 28.8\% on CIFAR100C and 68.1\% from 64.2\% on ImageNetC.
% Also, the SCE can further improve the performance on CIFAR10C and CIFAR100C, because SCE balances the gradient for high and low confidence predictions.
% We also find that SCE is ineffective for complex ImageNetC, and the reason may be the class sensitivity imbalance, causing the model to lean more towards one direction during optimization.

\subsection{Ablation Study on Auxiliary Components}
\lyu{To better understand the contribution of the auxiliary components, 
we conduct an ablation study while keeping the main framework of \textsc{VCoTTA} fixed. 
The backbone always includes variational inference with the uncertainty-aware prior mixture. 
We specifically examine the effect of the Variational Warm-Up (VWU) and the Symmetric Cross-Entropy (SCE). 
In Table~\ref{tab:ablation}, a checkmark indicates that the component is included, 
while absence means it is removed. 
In particular, ``w/o VWU'' means the BNN is trained directly without the warm-up stage, 
and ``w/o SCE'' means the symmetric cross-entropy is replaced with the standard cross-entropy loss.}
\lyu{The results show that both components provide additional improvements on top of the core framework. 
Removing VWU causes a clear drop in performance. 
For example, CIFAR10C decreases from 13.9\% to 18.4\%, and CIFAR100C decreases from 28.8\% to 31.5\%. 
This demonstrates that stochastic dynamics introduced during warm-up are important for stable adaptation. 
Adding SCE further improves accuracy on CIFAR10C and CIFAR100C, because it balances gradients between high-confidence and low-confidence predictions. 
On ImageNetC, the effect of SCE is marginal, which may be related to class imbalance and sensitivity in large-scale data. 
These findings confirm that VWU and SCE are effective auxiliary enhancements, 
while the main contribution of \textsc{VCoTTA} remains in the variational Bayesian formulation and the adaptive prior mixture. }

\begin{figure}[t]
% \vskip 0.2in
\begin{center}
\centerline{\includegraphics[width=\linewidth]{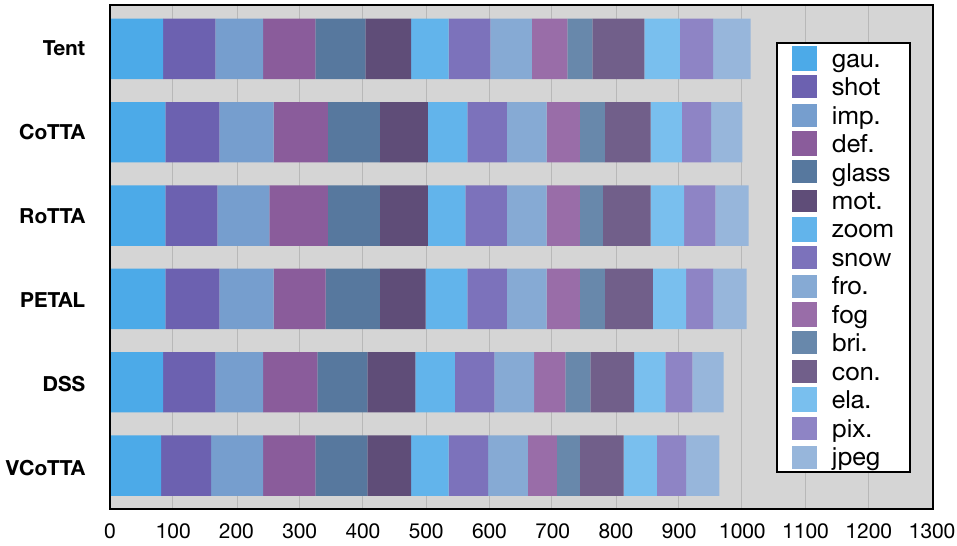}}
\caption{Sum of error rate (\%) on ImageNet-to-ImageNetC.}
\label{fig:inc}
\end{center}
\vspace{-15px}
\end{figure}

\begin{table}[t]
\caption{Ablation study of auxiliarxny components under corruption severity level 5. 
All variants keep the core framework of variational inference with the uncertainty-aware prior mixture unchanged. 
A checkmark indicates that the auxiliary component is used, while absence means it is removed.
}
\centering
\vspace{-5px}
\label{tab:ablation}
    \begin{tabular}{cccccc}
\toprule
VCoTTA & VWU & SCE  & CIFAR10C & CIFAR100C & ImageNetC\\
\midrule
$\surd$ &  &  & 18.4 &31.5& 68.1 \\
$\surd$ &  & $\surd$ & 17.1 &31.2& 68.3 \\
$\surd$ & $\surd$ &  & 13.9 &28.8& \textbf{64.2}\\
$\surd$ & $\surd$ & $\surd$ & \textbf{13.1} & \textbf{28.4} & 64.7  \\
\bottomrule
\end{tabular}
\vspace{-15px}
\end{table}

\begin{table}[h]
\vspace{-10px}
\centering
\label{tab:mix_prior}
\caption{Different weights for mixture of priors.
}
\begin{tabular}{cccccc}
\toprule
No. & $ \alpha$ & $1-\alpha$  & CIFAR10C & CIFAR100C & ImageNetC\\
\midrule
1 & 1 & 0 & 17.4 &35.0&69.9  \\
% 2 & 0.9 & 0.1 &  &  &  \\
% 3 & 0.8 & 0.2 &  &  &  \\
% 4 & 0.7 & 0.3 &  &  &  \\
% 5 & 0.6 & 0.4 &  &  &  \\
2 & 0.5 & 0.5 & 14.7 &31.3&67.0  \\
% 7 & 0.4 & 0.6 &  &  &  \\
% 8 & 0.3 & 0.7 &  &  &  \\
% 9 & 0.4 & 0.8 &  &  &  \\
% 10 & 0.1 & 0.9 &  &  &  \\
3 & 0 & 1 & 16.3 & 33.7 &71.2 \\
4 & \multicolumn{2}{c}{Eq.~\eqref{eq:alpha}} & \textbf{13.1} & \textbf{28.4} & \textbf{64.7} \\
\bottomrule
\end{tabular}
\vspace{-15px}
\end{table}

\subsection{Mixture of Priors}
In Sec.~\ref{sec:mixprior}, we introduce a Gaussian mixture strategy, where the current prior is approximated as the weighted sum of the source and teacher priors.
The weights are determined by computing the entropy over multiple augmentations of two models. 
To assess the effectiveness of these weights, we compare them with three naive weighting configurations: using only the source model, using only the teacher model, and a simple average with equal weights for both models.
The results, shown in Table~\ref{tab:mix_prior}, reveal that relying solely on the source model or the teacher model (i.e., weighting with $(1, 0)$ and $(0,1)$) results in suboptimal performance. 
Additionally, naive weighting with equal contributions from both models (i.e., $(0.5, 0.5)$) proves ineffective for CTTA due to the inherent uncertainty in both models.
In contrast, the proposed adaptive weights for the Gaussian mixture demonstrate its effectiveness. This underscores the significance of striking a balance between the two prior models in an unsupervised environment. The trade-off implies the need to discern when the source model's knowledge is more applicable and when the teacher model's shifting knowledge takes precedence.

\subsection{Uncertainty Estimation}

To evaluate the uncertainty estimation, we use negative loglikelihood (NLL) and Brier Score (BS)~\cite{brier1950verification}. Both NLL and BS are proper scoring rules~\cite{gneiting2007strictly}, and they are minimized if and only if the predicted distribution becomes identical to the actual distribution:
\begin{align}
    \text{NLL}= &
    % -\sum\nolimits_{i=1}^np_i\log\left(\hat{p}_i\right),%=-\log\left(\hat{p}_j\right)
    -\mbb{E}_{(x,y)\in\mc{D}^\text{test}}\log(p(y|x, \boldsymbol{\theta})),\\
% \end{equation}
% \begin{equation}
    \text{BS}= &
    \mbb{E}_{(x,y)\in\mc{D}^\text{test}}\left(p(y|x, \boldsymbol{\theta}) - \text{Onehot}(y)\right)^2,
    % \sum\nolimits_{i=1}^c(\hat{p}_i-p_i)^2=1-2\hat{p}_j+\sum\nolimits_{i=1}^c\hat{p}_i^2,
\end{align}
% \lyu{where ...}
where $\mc{D}^\text{test}$ denotes the test set, \ie, the unsupervised test dataset $\mc{U}$ with labels.
We evaluate NLL and BS with a severity level of 5 for all corruption types, and the compared results with SOTAs are shown in Table~\ref{tab:uncertainty}.
We have the following observations.
First, most methods suffer from low confidence in terms of NLL and BS because of the drift priors, where the model is unreliable gradually, and the error accumulation makes the model perform poorly.
Our approach outperforms most other approaches in terms of NLL and BS, demonstrating the superiority in improving uncertainty estimation. 
We also find that PETAL~\cite{brahma2023probabilistic} shows good NLL and BS, because PETAL forces the prediction over-confident to unreliable priors, thus PETAL shows unsatisfactory results on adaptation accuracy, such as 31.5\% vs. 28.4\% (Ours) on CIFAR100C.

\begin{table}[t]
\centering
\caption{Uncertainty estimation via NLL and BS.}
\label{tab:uncertainty}
\resizebox{\linewidth}{!}{
\begin{tabular}{lcccccc}
\toprule
\multirow{2}{*}{\textbf{Method}} & \multicolumn{2}{c}{\textbf{CIFAR10C}} & \multicolumn{2}{c}{\textbf{CIFAR100C}} & \multicolumn{2}{c}{\textbf{ImageNetC}}\\
% \cline{2-7}
 & NLL & BS & NLL & BS & NLL & BS \\
\midrule
Source & 3.0566 & 0.7478 & 2.4933 & 0.6707 &5.0703 & 0.9460\\
BN & 0.9988 & 0.3354 & 1.3932 & 0.4740 &3.9971 & 0.8345\\
Tent & 1.9391 & 0.3713 & 7.1097 &1.0838 & 3.6902& 0.8281 \\
CoTTA & 0.7192 & 0.2761 & 1.2907 & 0.4433 &3.6235 &\textbf{0.7972} \\
PETAL & 0.5899 & 0.2458 & \textbf{1.2267} & {0.4327} & 3.6391&{0.8017} \\
\midrule
VCoTTA & \textbf{0.5421} & \textbf{0.2130} & {1.2287} & \textbf{0.4307} & \textbf{3.4469} & 0.8092 \\
\bottomrule
\end{tabular}
}
\vspace{-15px}
\end{table}

% \begin{table}[h]
% \caption{Uncertainty estimation via NLL and BS.
% }
% \label{tab:uncertainty}
% \vspace{-5px}
% \begin{center}
% \begin{sc}
% \resizebox{\linewidth}{!}{
% \begin{tabular}{lcccccc}
% \toprule
% \multirow{2}{*}{Method} & \multicolumn{2}{c}{CIFAR10C} & \multicolumn{2}{c}{CIFAR100C} & \multicolumn{2}{c}{ImageNetC}\\
% % \cline{2-7}
%  & NLL & BS & NLL & BS & NLL & BS \\
% \midrule
% Source & 3.0566 & 0.7478 & 2.4933 & 0.6707 &5.0703 & 0.9460\\
% BN & 0.9988 & 0.3354 & 1.3932 & 0.4740 &3.9971 & 0.8345\\
% Tent & 1.9391 & 0.3713 & 7.1097 &1.0838 & 3.6902& 0.8281 \\
% CoTTA & 0.7192 & 0.2761 & 1.2907 & 0.4433 &3.6235 &\textbf{0.7972} \\
% PETAL & 0.5899 & 0.2458 & \textbf{1.2267} & {0.4327} & 3.6391&{0.8017} \\
% \midrule
% VCoTTA & \textbf{0.5421} & \textbf{0.2130} & {1.2287} & \textbf{0.4307} & \textbf{3.4469} & 0.8092 \\
% \bottomrule
% \end{tabular}
% }
% \end{sc}
% \end{center}
% \vskip -0.1in
% \end{table}

\begin{table}[h]
\vspace{-10px}
\centering
\caption{Gradually changing on severity 5.}
\label{tab:gradual}
    % \resizebox{\linewidth}{!}{
\begin{tabular}{lccc}
\toprule
\textbf{Method} &  \textbf{CIFAR10C} & \textbf{CIFAR100C} & \textbf{ImageNetC}\\
\midrule
Source  & 23.9  &32.9 & 81.7 \\
BN  & 13.5  & 29.7 & 54.1 \\
TENT  & 39.1 &72.7 &  53.7\\
CoTTA  & 10.6  & 26.3 & 42.1  \\
PETAL  & 10.5  & 27.1 & 60.5 \\
% SANTA  &   &  &  \\
% SWA  &   &  &  \\
% RMT  & 9.0  & 26.2 &  \\
\midrule
VCoTTA  & \textbf{8.9} & \textbf{24.4}  & {\textbf{39.9}} \\
\bottomrule
\end{tabular}
% }
\vspace{-15px}
\end{table}

\subsection{Gradually Corruption}

We also show gradual corruption results instead of constant severity in the major comparison, and the results are reported in Table~\ref{tab:gradual}.
% In the above standard setup, corruption types change abruptly in the highest severity, we now report the results for the gradual setup. 
Specifically, each corruption adopts the gradual changing sequence: $1\rightarrow 2\rightarrow 3\rightarrow 4\rightarrow 5\rightarrow 4\rightarrow 3\rightarrow 2\rightarrow 1$, where the severity level is the lowest 1 when corruption type changes, therefore, the type change is gradual.
The distribution shift within each type is also gradual.
Under this situation, our VCoTTA also outperforms other methods, such as 8.9\% vs. 10.5\% (\textsc{PETAL}) on CIFAR10C, and 24.4\% vs. 26.3\% (\textsc{CoTTA}) on CIFAR100C.
The results show that the proposed \textsc{VCoTTA} based on BNN is also effective when the distribution change is uncertain.

\subsection{Augmentation Analysis}

In our method, we use the standard augmentation following CoTTA~\cite{wang2022continual}.
In this subsection, we analyze the some characteristics via experiments.

\subsubsection{Confidence margin in augmentations}

\label{app:margin}
First, we analyze the margin $\epsilon$ in Eq.~\eqref{eq:teacher_aug}.
We experimentally validate different margins with more choices. 
Experimental results are shown in Tables~\ref{tab:margin}.
The results indicate that different datasets may require different margins to control confidence. 
Moreover, Eq.~\eqref{eq:teacher_aug} signifies that the reliable teacher likelihood is represented by the mean of its augmentations with $\epsilon$ more confidence than the teacher itself.
Tables~\ref{tab:margin} illustrates the selection of $\epsilon$ in our approach on CIFAR10C, CIFAR100C and ImageNetC. Note that when $\epsilon=-1$, it means no margin is used and the method will use all augmentated samples, i.e., without using Eq.~\eqref{eq:teacher_aug}. The results show that the proposed margin can effectively filter out unreliable augmented samples and achieve a better teacher log-likelihood.

\begin{table}[t]
% \vspace{-10px}
\caption{Analysis on confidence margin.
}
\label{tab:margin}
\vspace{-10px}
\begin{center}
\begin{tabular}{c|cc|cc|cc}
\toprule
No. & $\epsilon$  & CIFAR10C & $\epsilon$ & CIFAR100C & $\epsilon$ & ImageNetC\\
\midrule
1  & -1 & 15.1 & -1 & 29.3 &-1 &66.4 \\
2  & 0 & 13.23 & 0 & 28.78 &0&65.0 \\
3  & 1e-4 & 13.23 & 0.1 & 28.55 &1e-3&65.0 \\
4  & 1e-3 & 13.22 & 0.2 & 28.45 &1e-2 &64.8 \\
5  & 1e-2 & \textbf{13.14} & 0.3 & \textbf{28.43} &1e-1&\textbf{64.7}\\
6  & 1e-1 & 13.31 & 0.4 & 28.54 &2e-1 &66.2 \\

\bottomrule
\end{tabular}
\end{center}
\vspace{-15px}
\end{table}

\subsubsection{Different number of augmentation}

In our method, we also use augmentation to enhance the confidence.
We then evaluate the the number of augmentation in Eq.~\eqref{eq:alpha}.
The results can be seen in Table~\ref{tab:no_aug}, and shows that increasing the number of augmentations can enhance effectiveness, but this hyperparameter ceases to have a significant impact after reaching 32.

\begin{table}[t]
\caption{Different number of augmentation.
}
\label{tab:no_aug}
% \vskip -0.25in
% \vspace{5px}
\begin{center}
\vspace{-10px}
\begin{tabular}{lcccccc}
\toprule
% \multirow{2}{}{Method} & \multicolumn{2}{c}{CIFAR10C} & \multicolumn{2}{c}{CIFAR100C} & \multicolumn{2}{c}{ImageNetC}\\
% \cline{2-7}
Method & 0 & 4 & 8 & 16 & 32 & 64  \\
\midrule
CoTTA & 17.5 & 17.0 & 16.6 & 16.5 & 16.3 & 16.2   \\
PETAL & 17.3 &16.9  &16.4 &16.1 &16.0 &16.0 \\
% RMT & 14.5 & 14.5 & 14.5 &14.5 & 14.5&14.4 \\
% SANTA &  &  &  & & & \\
\midrule
VCoTTA & \textbf{14.9} & \textbf{13.8} & \textbf{13.6} & \textbf{13.3} & \textbf{13.1} &\textbf{13.1} \\
\bottomrule
\end{tabular}
\end{center}
\vspace{-15px}
\end{table}

\subsection{Analysis of Variational Warm-up Strategy}

\label{app:vwu}

We have discussed the Variational Warm-Up (VWU) strategy in Section~\ref{sec:warmup}, and explain that the warm-up strategy is a common practice in TTA and CTTA.
We further illustrate some attributes of the proposed variational warm-up strategy.
In our method, the VWU strategy is used to turn an off-the-shelf model to a pretrained BNN.
The advantage of this approach is that pretrained models are readily available (e.g., directly leveraging official models in PyTorch), while pretrained BNNs are challenging to obtain, especially for large-scale datasets. Moreover, training BNNs is more difficult. Therefore, constructing BNN pretrained models based on existing pretrained models is a feasible approach. Additionally, we find that such a warm-up strategy requires only a few epochs to achieve satisfactory results. To validate the characteristics of the proposed VWU strategy, we designed the following experiments.

% The warm-up strategy is a common approach in TTA and CTTA tasks, such as [1-5]. The warm-up strategy uses the source dataset before deploying the model to CTTA scenario, and it is regarded as a part of pretraining. All of these methods using warm-up are operationalized in source-free at test time and there is agreement that this setting adheres to the assumptions of CTTA.
% Our method uses the warm-up to inject the uncertainties into a given source model, i.e., turning an off-the-shelf pretrained CNN model into a pretrained BNN model. That means we can use the pretrained model provided by Pytorch like other compared methods. This is convenient to obtain a pretrained BNN, because the warm-up strategy use only a few epochs.
% (3) It also works if we directly pretrain a BNN from scratch, and we show the results is similar to that using the warm-up strategy.

\subsubsection{Warm-up vs. Directly Pretraining BNN}

First, we conducted experiments to compare the performance of obtaining pretrained BNN models using the warm-up approach versus directly training the source model with BNN.
We pretrain the BNN also use VI as describing in Section~\ref{sec:warmup}.
The results can be seen in Fig.~\ref{fig:cnnvsbnn}.
As we can see, the results are at the same level, for example VI pretraining is with 13.2\% error rate while the proposed VWU achieves 13.1\% on CIFAR10C.
However, if we direct turn a pretrained model to a BNN by adding random stochastic parameters, without warm-up strategy, the results drop to 17.1\%.
This shows that VWU is a feasible strategy to obtain a pretrained BNN.

% \begin{table}[h]
%     \centering
%     \caption{Error comparison between variational warm-up on CNN and directly pretraining BNN.}
%     \resizebox{\linewidth}{!}{
%     \begin{tabular}{lccc}
%     \toprule
%        Method  & CIFAR10C & CIFAR100C &  ImagenetC\\
%        \midrule
%        BNN (Random) $\rightarrow$ BNN + VI pretraining   & 13.2 & 28.8 & 65.3 \\
%        CNN (Pretrained) $\rightarrow$  BNN w/o VWU & 17.1& 31.2& 68.3\\
%        CNN (Pretrained) $\rightarrow$  BNN w/ VWU & 13.1 & 28.4 & 64.7 \\
%        \bottomrule
%     \end{tabular}
%     }
%     \label{tab:cnnvsbnn}
% \end{table}

\begin{figure}[t]
% \vskip 0.2in
\begin{center}
\centerline{\includegraphics[width=.9\linewidth]{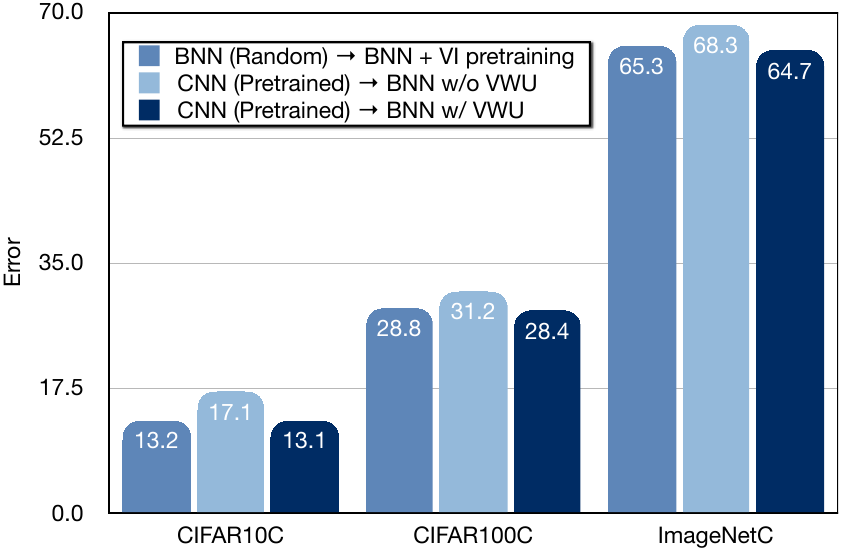}}
\vspace{-10px}
\caption{Comparison of variational warm-up and directly BNN pretraining.}
\label{fig:cnnvsbnn}
\end{center}
\vspace{-20px}
\end{figure}

\subsubsection{Number of Warm-up Epochs}

In our implementation, we employ only a limited number of epochs for variational warm-up, say 5 epochs. 
This is due to the fact that the pretrained model already fits well, thus requiring minimal adjustments to the mean of BNN. 
Additionally, the standard deviation (std) is initialized to be small.
Consequently, only a small number of iterations are necessary to update the BNN, and the step size is also kept small. Experimentation on the epoch number of variational warm-up reveals that keeping increasing epochs ( $>5$) will diminish performance, as shown in Fig.~\ref{fig:vwuepoch}.

% \begin{figure}[h]
% % \vskip 0.2in
% \begin{center}
% \centerline{\includegraphics[width=.6\linewidth]{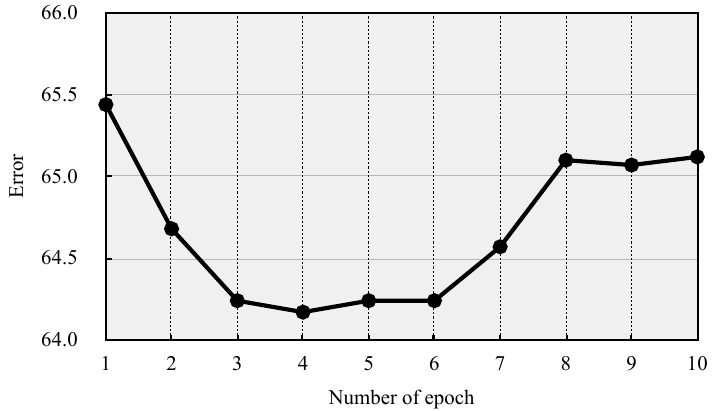}}
% \caption{10 loops under a same corruption order (CIFAR10C).}
% \label{fig:loop}
% \end{center}
% \end{figure}

\begin{figure}[t]
\centering
\subfigure[Different warm-up epochs.]{\includegraphics[width=.47\linewidth]{figs/vwu_epoch.pdf} \label{fig:vwuepoch}
}
\subfigure[Different warm-up data scale.]{\includegraphics[width=.47\linewidth]{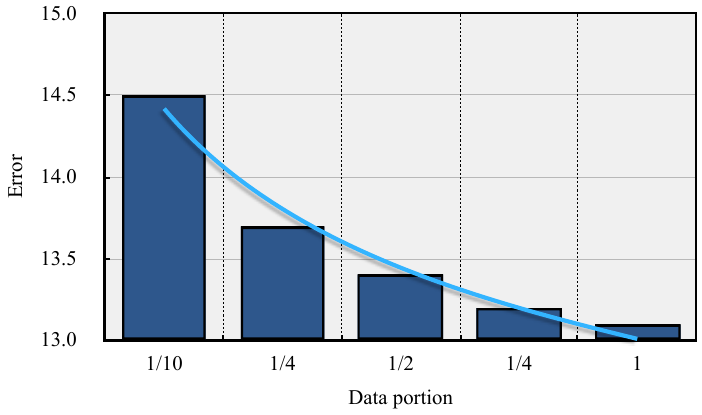} \label{fig:potion}
}
\caption{Comparisons on different warm-up settings of CIFAR10C.}
\vspace{-10px}
\end{figure}

\subsubsection{Only Portion Usage of Source Dataset in Warm-up}

The warm-up strategy is a common approach in TTA and CTTA tasks and it is regarded as a part of the pretraining stage. We also evaluate how we only use partial data for warm-up, and the results are shown in Fig.~\ref{fig:potion}. 
The experimental results demonstrate that a moderate reduction in sample size still maintains certain effectiveness of the warm-up strategy. However, excessive reduction, such as reducing to $1/10$, leads to a certain decline in effectiveness. This is because the warm-up strategy aims to incorporate statistical information from the dataset into the model, and insufficient data may result in inaccurate performance.

\subsection{Different Orders of Corruption}

\label{app:10order}

As we discuss in the major comparisons (see Sec~\ref{sec:ex.major}), the performance may be affected by the corruption order.
To provide a more comprehensive evaluation of the matter of the order, we conduct 10 different orders from Sec~\ref{sec:ex.major}, and show the average performance of all compared methods.
10 independent random orders of corruption are all under the severity level of 5. 
The results are shown in Table~\ref{tab:10orders}.
% First, we find that \textsc{TENT} has large std on simple datasets such as CIFAR10C and CIFAR100C, but small std on difficult datasets such as ImageNetC.
% \lyu{The reasons may be the }
We find that the order of corruption is minor on simple datasets such as CIFAR10C and CIFAR100C, but small std on difficult datasets such as ImageNetC.
The proposed \textsc{VCoTTA} outperforms other methods on the average error of CIFAR10C and CIFAR100C under 10 different corruption orders, which shows the effectiveness of the prior calibration in CTTA.
Moreover, \textsc{VCoTTA} has comparable results with PETAL on ImageNetC, but smaller std over 10 orders, which shows the robustness of the proposed method.

\begin{table}[t]
% \vskip -0.25in
\caption{Comparisons over 10 orders (avg $\pm$ std).
}
\label{tab:10orders}
\vspace{-10px}
\begin{center}
\begin{tabular}{cccc}
\toprule
Method  & CIFAR10C & CIFAR100C & ImageNetC\\
\midrule
% Source & 43.5 &46.4 & 82.4\\
% BN & 20.4 & 35.4 & 72.1\\ 
% Tent & 21.0$\pm$2.2 & 60.9$\pm$4.5 & 66.5$\pm$\textbf{0.9}  \\
CoTTA & 17.3$\pm$0.3 & 32.2$\pm$0.3  & 63.4$\pm$3.0 \\
PETAL & 16.0$\pm$0.1 & 33.8$\pm$0.3 & \textbf{62.7}$\pm$2.6 \\
% SANTA &  &  & \\
% SWA &  &  & \\
% RMT & 14.0$\pm$0.4 & 28.5$\pm$0.4 & \\
\midrule
VCoTTA & \textbf{13.1$\pm$0.1} & \textbf{28.2$\pm$0.2} & 62.8$\pm$\textbf{1.1}\\
\bottomrule
\end{tabular}
\end{center}
\vspace{-15px}
\end{table}

\begin{table*}[t]
\caption{Resetting to the source model when domain shifts.}
\label{tab:reset}
\begin{center}
\vspace{-10px}
% \resizebox{\linewidth}{!}{
\begin{tabular}{lccccccccccccccccc}
\toprule
Method & Reset & C1 & C2 & C3 & C4 & C5 & C6 & C7 & C8 & C9 & C10 & C11 & C12 & C13 & C14 & C15 & avg \\
\midrule
CoTTA && 40.1 & 37.7 & 39.7 & 26.9 & 38.0 & 27.9 & 26.4 & 32.8 & 31.8 & 40.3 & 24.7 & 26.9 & 32.5 & 28.3 & 33.5 & 32.5 \\
CoTTA & $\surd$ & 40.1 & 40.2 & 42.5 & 27.4 & 41.6 & 29.5 & 27.6 & 34.6 & 34.8 & 41.2 & 26.4 & 30.0 & 35.3 & 32.6 & 40.8 & 35.1 \\
\midrule
VCoTTA && 35.3 & 32.8 & 38.9 & 23.8 & 34.6 & 25.5 & 23.2 & 27.5 & 26.7 & 30.4 & 22.1 & 23.0 & 28.1 & 24.2 & 30.4 & 28.4 \\
VCoTTA & $\surd$ & 35.3 & 33.7 & 38.7 & 24.0 & 37.0 & 26.2 & 24.1 & 29.2 & 28.6 & 33.9 & 22.2 & 23.7 & 30.9 & 26.2 & 34.5 & 29.9 \\
\bottomrule
\end{tabular}
% }
\end{center}
\vspace{-15px}
\end{table*}

\subsection{Corruption Loops}

% An important issue of CTTA that is different from TTA, is the adaptation may be forgotten, say catastrophic forgetting.
In the real-world scenario, the testing domain may reappear in the future.
We evaluate the test conditions continually 10 times to evaluate the long-term adaptation performance on CIFAR10C. 
That is, the test data will be re-inference and re-adapt for 9 more turns under severity 5.
% To save space, we only show the continual adaptation results in the first, fourth, seventh, and last round.
Full results can be found in Fig.~\ref{fig:loop}. 
% All results are evaluated based on the Segformer-B5 architecture.
% We also wish to evaluate the the long-term adaptation performance of the proposed method. We repeat same sequence group for ten times, and experimental results are summarized in Table~\ref{tab:loop}. 
The results show that most compared methods obtain performance improvement in the first several loops, but suffer from performance drop in the following loops.
This means that the model drift can be even useful in early loops, but the drift becomes hard because of the unreliable prior.
The results also indicate that our method outperforms others in this long-term adaptation situation and has only small performance drops.

% \begin{table}[t]
% \vskip -0.1in
% \caption{loop
% }
% \label{tab:loop}
% % \vskip -0.25in
% \vspace{-5px}
% \begin{center}
% \begin{sc}
% \resizebox{\linewidth}{!}{
% \begin{tabular}{lcccccccccc}
% \toprule
% % \multirow{2}{}{Method} & \multicolumn{2}{c}{CIFAR10C} & \multicolumn{2}{c}{CIFAR100C} & \multicolumn{2}{c}{ImageNetC}\\
% % \cline{2-7}
% Method & 1 & 2 & 3 & 4 & 5 & 6 & 7 & 8 & 9 & 10 \\
% \midrule
% Source & 43.5 & 43.5 & 43.5 & 43.5 & 43.5 &43.5 &43.5&43.5&43.5&43.5\\
% BN & 20.4 & 20.4 & 20.4 &20.4 & 20.4&20.4 &20.4&20.4&20.4&20.4\\
% Tent & 20.4 & 27.4 & 34.1  & 40.5 & 46.5 & 56.3 &61.6&71.3&76.2&79.1\\
% CoTTA & 16.3 & 14.5 & 14.4 & 14.6 & 15.0 & 15.4 &15.8&16.3&16.6&17.1\\
% PETAL & 16.2 & 14.3 & 14.2 & 14.4&14.8 &15.2 &15.7&16.2&16.8&17.4\\
% % SANTA &  &  &  & & & &&&&\\
% % SWA &  &  &  & & & &&&&\\
% % RMT & 14.5  & 12.0 & 11.2 & 10.7&10.4 &10.2 &10.1&9.9&9.9&9.9\\
% \midrule
% VCoTTA & 13.1 & 12.6 & 12.7 & 13.1&13.4 &13.5 &13.7&13.8&13.8&13.7\\
% \bottomrule
% \end{tabular}
% }
% \end{sc}
% \end{center}
% \vskip -0.1in
% \end{table}

\begin{figure}[h]
% \vskip 0.2in
\begin{center}
\vspace{-5px}
\centerline{\includegraphics[width=\linewidth]{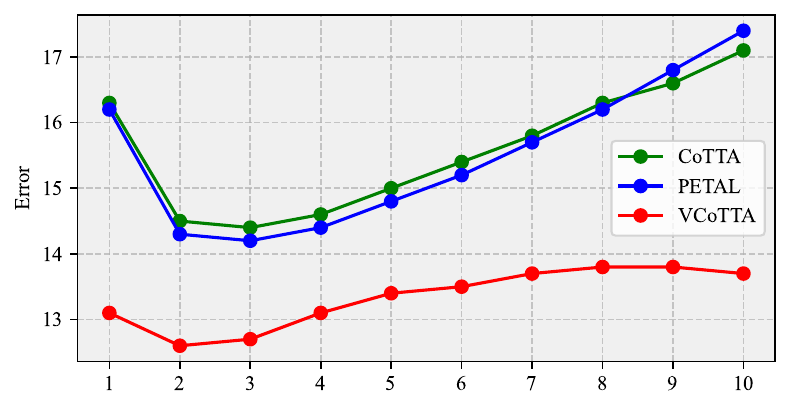}}
\caption{10 loops under a same corruption order (CIFAR10C).}
\label{fig:loop}
\end{center}
\vspace{-10px}
\end{figure}

\subsection{Experiment on different batch sizes}

CTTA does operate in an online setting, where all testing data is used only once. 
We evaluate our methods with different batch sizes.
For a testing dataset with a fixed number of samples, a larger batch size means fewer updates.
The current focus of CTTA research primarily revolves around batch-mode online settings, with batch sizes typically set to 200 in our experiments like other SOTAs. 
Moreover, strict online learning settings where each data point is processed individually are under-researched.
In fact, our method can be applied in scenarios with online learning or small batch sizes.
We experiment with the batch size of 1 on CIFAR10C, and compare the results with some baseline methods. The comparison results are shown in Fig.~\ref{fig:bs}.
Note that the batch normalization (BN) layers are disabled when the batch size is 1.
The results show that small batch sizes in CTTA result in worse performance.
We believe this is because a small batch size amplifies the uncertainty in model training.
We also find that TENT has high errors when the batch size is small but larger than 1. 
This is because TENT only updates the BN layers, which highly depends on the size of mini batch.

% \begin{table}[h]
%     \centering
%     \caption{Error comparisons of strict online learning (batch size $= 1$).}
%     \begin{tabular}{ccc}
%     \toprule
%         Method & Batch size 1 & Batch size 200 \\
%     \midrule
%         TENT & 43.5 & 20.1\\
%         CoTTA & 42.4 & 16.3\\
%         VCoTTA & \textbf{39.1} & \textbf{13.1} \\
%      \bottomrule
%     \end{tabular}
%     \label{tab:ex_online}
% \end{table}

\begin{figure}[t]
% \vskip 0.2in
% \vspace{-10px}
\centering
\includegraphics[width=\linewidth]{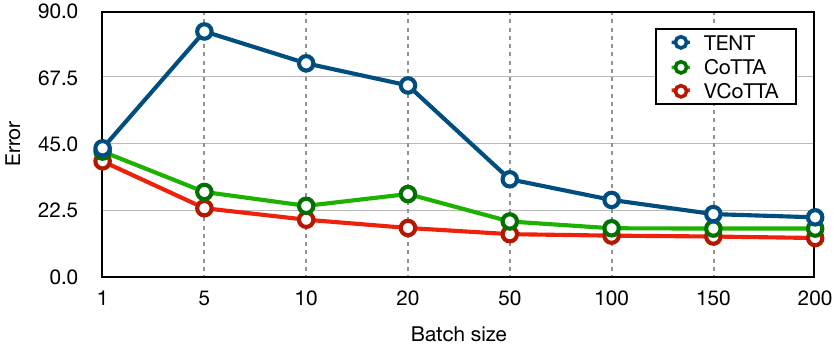}
\vspace{-10px}
\label{fig:bs}
\caption{
Error comparisons of different test batch sizes.
}
\end{figure}

% Within each domain, CTTA assumes that data is i.i.d., and the set of all data within each domain is denoted as $\mathcal{U}$. Our experimental setup is same as other SOTAs, where the size of $\mathcal{U}$ equals the size of the test set of the dataset. For example, CIFAR10C has a size of 10,000 (with 10 classes), CIFAR100 has a size of 10,000 (with 100 classes), and ImageNetC has a size of 50,000 (with 1,000 classes). During testing, the size of the mini-batch keeps as 200.

% \subsection{Temperature Analysis}

\subsection{Can model reset mitigate the error accumulation issue?}

In the experiments of CTTA, it is interesting to know if resetting to source simply can solve the drift issue and achieve better results than CTTA methods.
% In CTTA task , model does not know when the domain shift happen. Thus, resetting model to source model is not feasible in CTTA task. 
We conduct this experiment on CoTTA and our method, the results can be seen in Table \ref{tab:reset}. We find that the resetting performance is slightly less effective than the continual setting on both CoTTA and our method. 
The results indicate that there may be shared knowledge between different tasks in the CTTA. 
Directly resetting to the source model would overlook this shared knowledge. Although the issue of error accumulation no longer needs to be considered, it becomes challenging to effectively mine some of the shared information.

\subsection{Time and Memory Cost}

\label{app:time}

We implement our method using a single RTX-4090 GPU card.
We provide the memory and time cost on CIFAR10C in Table~\ref{tab:ex_time}.
Our proposed VCoTTA method does not offer an advantage in terms of memory usage. This is because, in the BNN framework, additional standard deviations are required for implementing local reparameterization tricks. However, during the testing phase, this does not significantly impact the efficiency of the model. This is because, during testing, only the student model employs variational inference, which requires uncertainty parameters.

\begin{table}[t]
    \centering
    \caption{Time and memory cost comparisons.}
    \begin{tabular}{ccc}
    \toprule
        Method & Memory & Time per corruption \\
    \midrule
        CoTTA & 10.3Gb & 272s\\
        PETAL & 10.2Gb & 261s\\
        VCoTTA & 11.1Gb & 279s \\
     \bottomrule
    \end{tabular}
    \label{tab:ex_time}
\end{table}

\subsection{Different learning rate}

We also adjust the learning rate and select suitable learning rates for different datasets. As shown in the figure, through experimentation, we choose learning rates of $1e-4$, $1e-4$, and $1e-2$ for the CIFAR10C, CIFAR100C, and ImagenetC datasets, respectively. Note that this result is only applicable to the experimental scenario. 
It is challenging to select the optimal learning rate in a real-world testing environment, as the future testing scenarios are unknown.

\begin{figure}[t]
% \vskip 0.2in
% \vspace{-10px}
\centering
\includegraphics[width=.95\linewidth]{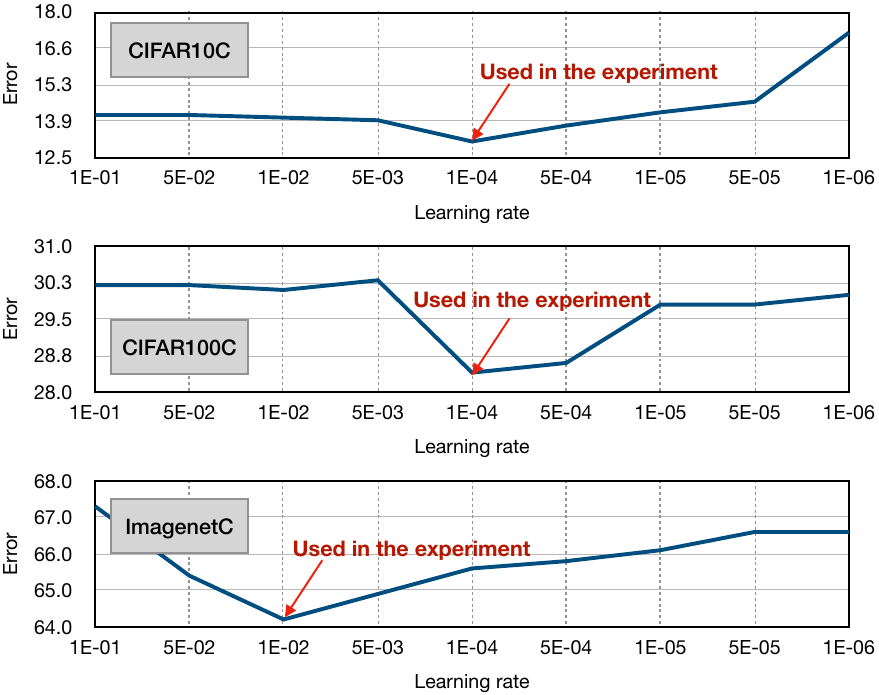}
\vspace{-10px}
\label{fig:lr}
\caption{
Learning rate analysis of the proposed method.
}
\end{figure}

\section{Conclusion and Limitation}

% The prior drift is crucial in Continual Test Time Adaptation (CTTA) methods using unlabeled data, as a prior drift can lead to significant error propagation. 
In this paper, we proposed a variational Bayesian inference approach, termed VCoTTA, to estimate uncertainties in CTTA. 
At the pretrained stage, we first transformed an off-the-shelf pretrained deterministic model into a BNN using a variational warm-up strategy, thereby injecting uncertainty into the source model.
At the test time, we implemented a mean-teacher update strategy, where the student model is updated via variational inference, while the teacher model is refined by the exponential moving average. 
Specifically, to update the student model, we proposed a novel approach that utilizes a mixture of priors from both the source and teacher models. 
Consequently, the ELBO can be formulated as the cross-entropy between the student and teacher models, combined with the KL divergence of the prior mixture.
We demonstrated the effectiveness of the proposed method on three datasets, and the results show that the proposed method can mitigate the issue of unreliable prior within the CTTA framework.

The efficacy of the proposed method relies on injecting uncertainty into the model during the pre-training phase, which may be unavailable in scenarios where pretraining is already completed, and original data is inaccessible. 
Additionally, constructing and training BNN models are inherently more complex, highlighting the importance of enhancing computational efficiency. The Gaussian mixture method relies on multiple data augmentations, which also incurs computational costs. Future endeavors could explore more efficient approaches for Gaussian mixtures.

% if have a single appendix:
%\appendix[Proof of the Zonklar Equations]
% or
%\appendix  % for no appendix heading
% do not use \section anymore after \appendix, only \section*
% is possibly needed

% use appendices with more than one appendix
% then use \section to start each appendix
% you must declare a \section before using any
% \subsection or using \label (\appendices by itself
% starts a section numbered zero.)
%

% \appendices
% \section{Proof of the First Zonklar Equation}
% Appendix one text goes here.

% % you can choose not to have a title for an appendix
% % if you want by leaving the argument blank
% \section{}
% Appendix two text goes here.

% % use section* for acknowledgment
% \section*{Acknowledgment}

% Our work was supported by the following institutions and projects: National Science and Technology Major Project (No. 2022ZD0117901); National Natural Science Foundation of China (Nos.62406323, 62476189, 62373355); Postdoctoral Fellowship Program of CPSF (No. GZC20232993); China Postdoctoral Science Foundation (No. 2024M753496).

% Can use something like this to put references on a page
% by themselves when using endfloat and the captionsoff option.
\ifCLASSOPTIONcaptionsoff
  \newpage
\fi

% trigger a \newpage just before the given reference
% number - used to balance the columns on the last page
% adjust value as needed - may need to be readjusted if
% the document is modified later
%\IEEEtriggeratref{8}
% The "triggered" command can be changed if desired:
%\IEEEtriggercmd{\enlargethispage{-5in}}

% references section

% can use a bibliography generated by BibTeX as a .bbl file
% BibTeX documentation can be easily obtained at:
% http://mirror.ctan.org/biblio/bibtex/contrib/doc/
% The IEEEtran BibTeX style support page is at:
% http://www.michaelshell.org/tex/ieeetran/bibtex/
\bibliographystyle{IEEEtran}
% argument is your BibTeX string definitions and bibliography database(s)
%\bibliography{IEEEabrv,../bib/paper}
%
% <OR> manually copy in the resultant .bbl file
% set second argument of \begin to the number of references
% (used to reserve space for the reference number labels box)
% \begin{thebibliography}{1}

% \bibitem{IEEEhowto:kopka}
% H.~Kopka and P.~W. Daly, \emph{A Guide to \LaTeX}, 3rd~ed.\hskip 1em plus
%   0.5em minus 0.4em\relax Harlow, England: Addison-Wesley, 1999.

% \end{thebibliography}
\bibliography{ref}

\end{document}